\newcommand{\Mb}{{\bf M}}
\newcommand{\Ob}{{\bf O}}
\newcommand{\Mnm}{\Mb_{n,m}}
\newcommand{\hW}{{W_*}}
\newcommand{\hA}{{A_*}}
\newcommand{\hB}{{B_*}}
\newcommand{\loss}{{\ell}}
\newcommand{\prox}{{\rm prox}}
\newcommand{\N}{{\mathbb N}}
\newcommand{\R}{{\mathbb R}}
\newcommand{\diag}{{\rm diag}}
\newcommand{\beq}{\begin{equation}}
\newcommand{\eeq}{\end{equation}} 
\newcommand{\bea}{\begin{eqnarray}}
\newcommand{\eea}{\end{eqnarray}}
\newcommand{\trace}{{\rm tr}}
 \newcommand{\lb}{{\langle}}
\newcommand{\rb}{{\rangle}} 
\newcommand{\SSS}{\scriptscriptstyle}
\def\boldf#1{\hbox{\rlap{$#1$}\kern.4pt{$#1$}}}
 \newcommand{\trans}{^{\scriptscriptstyle
\top}}
\newcommand{\rank}{{\rm rank}}
\newcommand{\minimize}[1]{\underset{#1}{\rm minimize}}
\newcommand{\argmin}[1]{\underset{#1}{\rm argmin}}
\renewcommand{\eqref}[1]{Eq.~(\ref{#1})}
\newcommand{\grad}{\nabla}
\newcommand{\fla}{{f_{\lambda}}}
\newcommand{\glar}{{g_{\lambda,r}}}
\newcommand{\glarplus}{{g_{\lambda,r+1}}}
\newcommand{\glarstar}{{g_{\lambda,r_*}}}
\newcommand{\secref}[1]{Sec.~\ref{#1}}
\renewcommand{\algref}[1]{Alg.~\ref{#1}}
\let\inf\relax \DeclareMathOperator*\inf{\vphantom{p}inf}
\newcommand{\bproof}{\begin{proof}}
\newcommand{\eproof}{\end{proof}}
\renewenvironment{proof}[1][\proofname]{\par
  \pushQED{\qed}%
  \normalfont \topsep6\p@\@plus6\p@\relax
  \trivlist
  \item[\hskip\labelsep
        \bfseries
    #1\@addpunct{.}]\ignorespaces
}{%
  \popQED\endtrivlist\@endpefalse
}
\def\eop{$\rule{1.3ex}{1.3ex}$}
\renewcommand\qedsymbol\eop
\declaretheorem[name=Theorem,refname=Thm.]{theorem}
\declaretheorem[name=Lemma,sibling=theorem]{lemma}
\declaretheorem[name=Proposition,refname=Prop.,sibling=theorem]{proposition}
\declaretheorem[name=Corollary,refname=Cor.,sibling=theorem]{corollary}
\declaretheorem[name=Definition,refname=Def.,sibling=theorem]{definition}
\title{\sffamily\huge\bf Reexamining Low Rank Matrix Factorization \\for Trace Norm Regularization}
\author{Carlo Ciliberto $^{1}$ \\ {\small\em c.ciliberto@ucl.ac.uk~~~} \and Dimitrios Stamos $^{1}$ \\ {\small\em d.stamos@cs.ucl.ac.uk~~~} \and Massimiliano Pontil $^{1,2}$ \\ {\small\em m.pontil@cs.ucl.ac.uk~~~~~} }
\begin{document}

\maketitle

\begin{abstract}
\footnotetext[1]{University College London, London, UK}\footnotetext[2]{Computational Statistics and Machine Learning - Istituto Italiano di Tecnologia, Genova, Italy}Trace norm regularization is a widely used approach for learning low rank matrices. 
A standard optimization strategy is based on formulating the problem as one of low rank matrix factorization which, however, leads to a non-convex problem.
In practice this approach works well, and it is often computationally faster than standard convex solvers such as proximal gradient methods. Nevertheless, it is not guaranteed to converge to a global optimum, and the optimization can be trapped at poor stationary points. In this paper we show that it is possible to characterize all critical points of the non-convex problem. This allows us to provide an efficient criterion to determine whether a critical point is also a global minimizer.  
Our analysis suggests an iterative meta-algorithm that dynamically expands the parameter space and allows the optimization to escape any non-global critical point, thereby converging to a global minimizer. The algorithm can be applied to problems such as matrix completion or multitask learning, and our analysis holds for any random initialization of the factor matrices. Finally, we confirm the good performance of the algorithm on synthetic and real datasets.
\end{abstract}

\section{Introduction}

Learning low rank matrices is a problem of broad interest in machine learning and statistics, with applications ranging from collaborative filtering \cite{rennie2005fast,koren2009matrix}, to multitask learning \cite{Argyriou2008}, to computer vision \cite{harchaoui2012large}, and many more. A principled approach to tackle this problem is via suitable convex relaxations. 
Perhaps the most successful strategy in this sense is provided by trace (or nuclear) norm regularization  \cite{Amit2007,Argyriou2008,Bach2008,Srebro2005}. However, solving the corresponding optimization problem is computationally expensive for two main reasons. First, many commonly used algorithms require the computation of the proximity operator (e.g. \cite{Bauschke2010}) which entails performing the singular value decomposition at each iteration. Second, and most importantly, the space complexity of convex solvers grows with the matrix size, which makes it prohibitive to employ them for large scale applications. 

Due to the above shortcomings, practical algorithms for low rank matrix completion often use an explicit low rank matrix factorization to reduce the number of variables (see e.g. \cite{koren2009matrix,hastie2015matrix} and references therein). In particular, a reduced variational form of the trace norm is used \cite{Srebro2005}. The resulting problem is however non-convex, and popular methods such as alternate minimization or alternate gradient descent may get struck at poor stationary points. Recent studies \cite{NIPS2016_6048,bhojanapalli2016global} have shown that under certain conditions on the data generation process (underling low rank model, RIP, etc.) a particular version of the non-convex problem can be solved efficiently.
However such conditions are not verifiable in real applications, and the problem of finding a global solution remains open. 

In this paper we characterize the critical points of the non-convex problem and provide an efficient criterion to determine whether a critical point is also a global minimizer. Our analysis is constructive and suggests an iterative meta-algorithm that dynamically expands the parameter space to escape any non-global critical point, thereby converging to a global minimizer. We highlight the potential of the proposed meta-algorithm, by comparing its computational and statistical performance to two state of the art methods \cite{hsieh2014,hastie2015matrix}.

The paper is organized as follows. In \secref{sec:2} we introduce the trace norm regularization problem and basic notions used throughout the paper. In \secref{sec:3} we present the low rank matrix factorization approach and set the main questions addressed in the paper. In \secref{sec:analysis} we present our analysis of the critical points of the low rank matrix factorization problem and the associated 
meta-algorithm. In \secref{sec:exps} we report results from numerical experiments using our method.~The Appendix contains proofs of the results, only stated in the main body of the paper, together with further auxiliary results and empirical observations.


\section{Background and Problem Setting}
\label{sec:2}
In this work we study trace norm regularized problems of the form 
\begin{equation}\label{eq:original}
    \minimize{W\in\R^{n \times m}} ~ \fla(W), \qquad \fla(W) =  \ell(W) + \lambda ~ \|W\|_*
\end{equation}
where $\ell:\R^{n \times m}\to\R$ is a twice differentiable convex function with Lipschitz continuous gradient, 
$\|W\|_*$ denotes the trace norm of a matrix $W\in\R^{n \times m}$, namely the sum of the singular values of $W$, and 
$\lambda$ is a positive parameter. Examples of relevant learning problems that can be formulated as \eqref{eq:original} are:

\noindent{\it Matrix Completion}. In this setting we wish to recover a matrix $Y\in\R^{n \times m}$ 
from a small subset of its entries. A typical choice for $\ell$ is the square error, $\ell(W) = \|M \odot (Y - W)\|_F^2$, where $\|\cdot\|_F$ is the Frobenius norm, $\odot$ denotes the Hadamard product (i.e. the entry-wise product) between two matrices and $M\in\R^{n \times m}$ is a binary matrix used to ``mask'' the entries of $Y$ that are not available.

\noindent{\it Multi-task Learning}. 
Here, the columns $w_1,\dots,w_m$ of matrix $W$ are interpreted as the regression vectors of different learning tasks. 
Given $m$ datasets $(x_{j}^i,y_{j}^i)_{i=1}^{n_j}$ with $x_{j}^i \in \R^n$ and $y_{j}^i \in \R$, for $j=1,\dots,m$, we choose  
$\ell(W) = \sum_{j=1}^m \frac{1}{~n_j}\sum_{i=1}^{n_j} {\bar \loss}(y_{j}^i,w_j\trans x_{j}^i)$, 
where ${\bar \loss}:\R\times\R\to\R$ is a prescribed loss function (e.g. the square or the logistic loss). 

Other examples which are captured by problem (\ref{eq:original}) include collaborative filtering with attributes \cite{Abernethy2009} and multiclass classification \cite{Amit2007}.

\subsection{Proximal Gradient Methods}
Problem (\ref{eq:original}) can be solved by first-order optimization methods such as the proximal forward-backward (PFB) splitting \cite{Bauschke2010}. 
Given a starting point $W_0\in\R^{n \times m}$, PFB produces a sequence $(W_k)_{k\in\N}$ with
\begin{equation}\label{eq:pfb}
    W_{k+1} = \prox_{\gamma \lambda \|\cdot\|_*} ~ \big( W_k - \gamma \grad \ell(W_k) \big)
\end{equation}
where $\gamma>0$ and 
for any convex function $\phi:\R^{n \times m}\to\R$, the associated proximity operator at $W\in\R^{n \times m}$ is defined as $\prox_\phi(W) = \textrm{argmin}\big\{ \phi(Z) + \frac{1}{2}\|W - Z\|_F^2:{Z\in\R^{n \times m}} \big\}$.
In particular, the proximity operator of the trace norm at $W\in\R^{n \times m}$ corresponds to performing a soft-thresholding on the singular values of $W$. That is, assuming a 
{\em singular value decomposition (SVD)} $W = U\Sigma V\trans$, where $U\in\R^{n \times r}$, $V\in\R^{m \times r}$ have orthonormal columns, $\Sigma= \diag(\sigma_1,\dots,\sigma_r)$, with $\sigma_1\geq \cdots \geq \sigma_r > 0$ and $r=\rank(W)$, we have
\begin{equation}
\label{eq:prox-trace}
\prox_{\gamma \lambda \|\cdot\|_*}(W) = U \diag(h_{\gamma\lambda}(\sigma_1),\dots,h_{\gamma\lambda}(\sigma_r)) V\trans 
  \end{equation}
where $h_{\gamma\lambda}$ is the {\em soft-thresholding} operator, defined for $\sigma \geq 0$ as $h_{\gamma\lambda}(\sigma) = \max(0,\sigma-\gamma\lambda)$.

%
PFB guarantees that for a suitable choice of the descent step $\gamma$ (e.g. $\gamma<2/L$ with $L$ the Lipschitz constant of the gradient of $\ell$), the sequence $\fla(W_k)$ converges to the global minimum of the problem with a rate of $O(1/k)$ \cite{Bauschke2010} (faster rates can be achieved using accelerated versions of the algorithm \cite{Beck2009}). However, at each iteration PFB performs the SVD of an $n \times m$ matrix via Eqs. (\ref{eq:pfb}) and (\ref{eq:prox-trace}), which requires $O(\min(n,m)~nm)$ operations, a procedure that becomes prohibitively expensive for large values of $n$ and $m$. 
Other methods such as those based on Frank-Wolfe procedure (e.g. \cite{dudik2012lifted}) alleviate this cost but require more iterations. More importantly, these methods 
need to store in memory the iterates $W_k$ imposing an $O(nm)$ space complexity, a major bottleneck for large scale applications. However, trace norm regularization is typically applied to problems where the solution of problem (\ref{eq:original}) is assumed to be low-rank, namely of the form $AB\trans$ for some $A\in\R^{n \times r}, B\in\R^{m \times r}$ and $r \ll \min(n,m)$. Therefore it would be ideal to have an optimization method capable to capture this aspect and consequently reduce the memory requirement to $O(r(m + n))$ by keeping track of the two factor matrices $A$ and $B$ throughout the optimization rather than their product. This is the idea behind factorization-based methods, which have been observed to lead to remarkable performance in practice and are the subject of our investigation in this work.
\subsection{Matrix Factorization Approach}
\label{sec:3}
Factorization methods build on the so-called {\em variational form} of the trace norm. Specifically, the trace norm of a matrix $W\in\R^{n \times m}$ can be characterized as (see e.g. \cite{Jameson1987} or \autoref{lemma:trace_norm} in the Appendix)
\beq
    \|W\|_* =   \frac{1}{2} \inf \Big\{\|A\|_F^2 + \|B\|_F^2,~: ~ r\in\N,~ A\in\R^{n \times r},~ B\in\R^{m \times r},~ W = AB\trans\Big\}.
    \label{eq:variational-form}
\eeq
with the infimum always attained for $r=\rank(W)$. The above formulation leads
to the following ``factorized'' version of the original optimization problem (\ref{eq:original})
\begin{equation}\label{eq:factorized-problem}
    \minimize{A\in\R^{n \times r}, B\in\R^{m \times r}} ~ \glar(A,B), \qquad \glar(A,B) =  \ell(AB\trans) + \frac{\lambda}{2} ~ \left(\|A\|_F^2 + \|B\|_F^2\right)
\end{equation}
where $r\in\N$ is now a further hyperparameter of the problem. Clearly, $\fla$ and $\glar$ are tightly related and a natural question is whether minimizing the latter would allow to recover a solution of the original problem. The following well-known result (of which we provide a proof in the Appendix for completeness) shows that the two problems are indeed equivalent (for sufficiently large $r$).

\begin{restatable}[Equivalence between problems (\ref{eq:original}) and (\ref{eq:factorized-problem})]{proposition}{PEquivalence}\label{prop:equivalence}
Let $W_* \in\R^{n \times m}$ be a global minimizer of $\fla$ in \eqref{eq:original} with $r_* = \textrm{rank}(W_*)$. Then, for every $r\geq r_*$, every global minimizer $(A_*,B_*)$ of $\glar$ is such that 
\begin{equation}
    \glar(A_*,B_*) = \fla(A_*B_*\trans) = \fla(W_*).
\end{equation}
\end{restatable}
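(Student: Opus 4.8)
The plan is to sandwich $\glar(A_*,B_*)$ between two copies of $\fla(W_*)$, using \eqref{eq:variational-form} as the bridge between the two objectives, and then to read off the statement about $\fla(A_*B_*\trans)$ as a byproduct.

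First I would prove $\fla(W_*) \le \glar(A_*,B_*)$. For \emph{any} pair $A\in\R^{n\times r}$, $B\in\R^{m\times r}$, set $W = AB\trans$. Since $(A,B)$ is one particular factorization of $W$ of inner dimension $r$, the infimum in \eqref{eq:variational-form} is no larger than the value it takes on this factorization, so $\|W\|_* \le \frac12(\|A\|_F^2 + \|B\|_F^2)$. Adding $\ell(AB\trans)=\ell(W)$ and scaling the norm term by $\lambda$ gives $\fla(AB\trans) \le \glar(A,B)$. Because $W_*$ globally minimizes $\fla$, we obtain $\fla(W_*) \le \fla(AB\trans) \le \glar(A,B)$ for every $(A,B)$, and in particular for the minimizer $(A_*,B_*)$.

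Next I would prove the reverse inequality $\glar(A_*,B_*) \le \fla(W_*)$ by exhibiting an explicit competitor. Since the infimum in \eqref{eq:variational-form} is attained at $r_*=\rank(W_*)$, there are $\bar A\in\R^{n\times r_*}$, $\bar B\in\R^{m\times r_*}$ with $W_*=\bar A\bar B\trans$ and $\frac12(\|\bar A\|_F^2+\|\bar B\|_F^2)=\|W_*\|_*$. As $r\ge r_*$, I append $r-r_*$ zero columns to each factor to form $\tilde A\in\R^{n\times r}$, $\tilde B\in\R^{m\times r}$; this changes neither the product ($\tilde A\tilde B\trans=W_*$) nor either Frobenius norm. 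Hence $\glar(\tilde A,\tilde B)=\ell(W_*)+\lambda\|W_*\|_*=\fla(W_*)$, and minimality of $(A_*,B_*)$ yields $\glar(A_*,B_*)\le\glar(\tilde A,\tilde B)=\fla(W_*)$. Combining the two bounds gives $\glar(A_*,B_*)=\fla(W_*)$.

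Finally, to obtain $\fla(A_*B_*\trans)=\fla(W_*)$, I would reuse the first-step inequality at the point $(A_*,B_*)$, which reads $\fla(A_*B_*\trans)\le\glar(A_*,B_*)=\fla(W_*)$; on the other hand global minimality of $W_*$ forces $\fla(A_*B_*\trans)\ge\fla(W_*)$, so the two are equal. I do not anticipate a genuine obstacle here: the argument hinges entirely on \eqref{eq:variational-form} together with its attainment at the rank. The only step needing a moment's care is the zero-padding, required precisely because $r$ may strictly exceed $r_*$ while the factors in $\glar$ carry a fixed number $r$ of columns; one checks that padding preserves both the product and the sum of squared Frobenius norms, which it does trivially.
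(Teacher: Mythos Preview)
Your proof is correct and follows essentially the same route as the paper's: both rely on the variational characterization \eqref{eq:variational-form} to compare $\fla$ and $\glar$, and both use the SVD-based factorization of $W_*$ to produce a competitor achieving equality. Your direct sandwich argument is a bit cleaner than the paper's proof-by-contradiction, and you make explicit the zero-padding step for $r>r_*$ and the derivation of $\fla(A_*B_*\trans)=\fla(W_*)$, both of which the paper leaves implicit.
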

The above proposition implies that for sufficiently large values of $r$ in \eqref{eq:factorized-problem}, the optimization of $\fla$ and $\glar$ are {\em equivalent}. Therefore, {\em we can minimize $\fla$ by finding a global minimizer for $\glar$}. This is a well-known approach to trace norm regularization (see e.g. \cite{koren2009matrix,rennie2005fast}) and can be extremely advantageous in practice. Indeed, we can leverage on a large body of smooth optimization literature to solve such a factorized problem \cite{Bertsekas1999}. As an example, if we apply Gradient Descent (GD) from a starting point $(A_0,B_0)$, we obtain the sequence $(A_k,B_k)_{k\in\N}$ with
\begin{equation}
\label{eq:gd}
~\left.
\begin{array}{lcl}
A_{k+1}  & = &A_k - \gamma ( \grad \ell(A_k B_k\trans)B_k + \lambda A_k) \\ [2pt]
B_{k+1}  &= & B_k - \gamma ( \grad \ell(A_k B_k\trans)\trans A_k + \lambda B_k ).
\end{array} \right.
\end{equation}
This approach is much more appealing than PFB from a computational perspective because: $1$) for small values of $r$, the iterations at \eqref{eq:gd} are extremely fast since they mainly consists of matrix products and therefore require only $O(nmr)$ operations; $2)$ the space complexity of GD is $O(r(n+m))$, which may be remarkably smaller than the $O(nm)$ of PFB for small values of $r$; $3$) Even if for large values of $r$, e.g. $r=\min(n,m)$, every iteration has the same time complexity as PFB, we do not need to perform expensive operations such as the SVD of an $n \times m$ matrix at every iteration. This dramatically reduces computational times in practice.

The strategy of minimizing $\glar$ instead of $\fla$ was originally proposed in \cite{Srebro2005} and its empirical advantages have been extensively documented in previous literature, e.g. \cite{koren2009matrix}. However, this approach opens important 
theoretical and practical questions that have not been addressed by previous work:
\begin{itemize}
\item {\bf How to choose r?} By \autoref{prop:equivalence} we know that for suitably large values of $r$ the minimization of $\glar$ and $\fla$ are equivalent. However, a lower bound for such an $r$ cannot be recovered analytically from the functional itself and, so, it is not clear how to choose $r$ in practice.
\item {\bf Global convergence}. The function $\glar$ is not jointly convex in the two variables $A$ and $B$. This opens the question of whether GD (or other optimization methods) converge to a global minimizer  for sufficiently large values of $r$.
\end{itemize}
Investigating such issues is the main focus of this work.

\section{Analysis}
\label{sec:analysis}
In this section we study the questions outlined above and provide a meta-algorithm to minimize the function $\glar$ while incrementally searching for a rank $r$ for which \autoref{prop:equivalence} is verified. 
Our analysis builds upon the following keypoints:
\begin{itemize}
  \item (\autoref{thm:characterization}) We characterize all critical points of $\glar$, namely those points to which iterative optimization methods applied to \eqref{eq:factorized-problem} (e.g. GD) could in principle converge to.
  \item (\autoref{thm:criterion}) We derive an efficient criterion to determine whether a critical point of $\glar$ is a global minimizer (typically an NP hard problem for non-convex functions). 
  \item  (\autoref{thm:escape}) We show that for any critical point $(A,B)$ of $\glar$ which is not a global minimizer, it is always possible to {\em constructively} find a descent direction for $g_{\lambda,r+1}$ from the point $([A~ 0], [B~ 0]) \in \R^{n \times  (r+1)} \times \R^{m \times  (r+1)}$.
  \item (\autoref{thm:global-convergence}) By combining the above results we show that for $r \geq \min(n,m)$, every critical point of $\glar$ is either a global minimizer or a so-called {\em strict saddle point}, 
namely a point where the Hessian of the target function has at least a negative direction. We can then appeal to \cite{lee2016} to show that descent methods such as GD avoid strict saddle points and hence convergence to a global minimizer.
\end{itemize}
The above discussion suggests a natural ``meta-algorithm'' (which is presented more formally in \secref{sec:meta-algorithm}) to address the minimization of $\fla$ via $\glar$ while increasing $r$ incrementally: 
\begin{enumerate}
    \item Initialize $r=1$. Choose $A_0'\in\R^n$ and $B_0'\in\R^m$.
    \item Starting from $(A_{r-1}',B_{r-1}')$, converge to a critical point $(A_r,B_r)$ for $\glar$.
    \item If $(A_r,B_r)$ satisfies our criterion for global optimality (see \autoref{thm:criterion}) stop, otherwise: 
    \item Perform a step in a descent direction for $g_{\lambda,r+1}$ from $([A_r~ 0], [B_r~ 0])$ to a point $(A'_r,B'_r)$, $A_r'\in\R^{n \times r+1}, B_r'\in\R^{m \times r+1}$; Increase $r$ to $r+1$ and go back to Step $2$.
\end{enumerate}
From our analysis in the following, the procedure above is guaranteed to stop {\em at most} after $r = \min(n,m)$ iterations. However, \autoref{prop:equivalence}, together with our criterion for global optimality, suggests that this meta-algorithm could stop much earlier if $\fla$ admits a low-rank minimizer (which is indeed the case in our experiments). This has two main advantages: $1$) by exploring candidate ranks incrementally, we can expect significantly faster computations and convergence if our optimality criterion activates for $r \ll \min(n,m)$ and 2) we automatically recover the rank of a minimizer for $\fla$ without the need to perform expensive operations such as SVD.

\begin{restatable}{remark}{RVidal}\label{rem:vidal} The meta-algorithm considered in this paper is related to the optimization strategy recently proposed in \cite{Vidal}, where the authors study convex problems for which a non-convex ``factorized'' formulation exists, including the setting considered in this work as a special case. However, by adopting such a general perspective, the resulting optimization strategy is less effective when applied to the specific minimization of $\glar$. In particular: $1)$ the optimality criterion derived in \cite{Vidal} is only a sufficient but not necessary condition;
$2)$ the upper bound on $r$ is much larger than the one provided in this work, i.e. $r = nm$ rather than $r = \min(n,m)$; $3)$ convergence guarantees to a global optimum cannot be provided. 

By focusing exclusively on the question of minimizing $\fla$ via its factorized form $\glar$ and, by leveraging on the specific structure of the problem, it is instead possible to provide a further analysis of the behavior of the proposed meta-algorithm.
\end{restatable}

\subsection{Critical Points 
and a Criterion for Global Optimality}
\label{sec:characterization-criterion}

Since $\glar$ is a non-convex smooth function, in principle we can expect optimization algorithms based on first or second order methods to converge only to critical points, i.e. points $(A_*,B_*)$ for which $\grad \glar(A_*,B_*) = 0$. The following result provides a characterization of such critical points and plays a key role in our analysis. 
\begin{restatable}[Characterization of Critical Points of $\glar$]{proposition}{PCharacterization}\label{thm:characterization}
Let $(A_*,B_*)\in\R^{n\times r}\times\R^{m \times r}$ be a critical point of $\glar$. Let $s\leq \min(n,m)$ and let $U\in\R^{n \times s}$ and $V\in\R^{m \times s}$ be two matrices with orthonormal columns corresponding to the left and right singular vectors of $\grad \ell(A_* B_*\trans)\in\R^{n \times m}$ with singular value equal to $\lambda$. Then, there exists $C\in\R^{s \times r}$, such that $A_* = U C$ and $B_* = - V C$.
\end{restatable}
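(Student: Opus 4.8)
The plan is to read off the first-order optimality conditions and recast them as an eigenvalue problem driven by the singular value decomposition of $G := \grad\ell(A_* B_*\trans)$. Writing $\grad\glar(A_*,B_*)=0$ for each block gives the coupled pair
\[
G B_* + \lambda A_* = 0, \qquad G\trans A_* + \lambda B_* = 0 .
\]
First I would decouple these relations: substituting the second equation into the first (and symmetrically the first into the second) eliminates the cross terms and yields $G G\trans A_* = \lambda^2 A_*$ and $G\trans G B_* = \lambda^2 B_*$. Consequently every column of $A_*$ lies in the eigenspace of $G G\trans$ for the eigenvalue $\lambda^2$, and every column of $B_*$ lies in the corresponding eigenspace of $G\trans G$.

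The next step is to translate these eigenspaces through the SVD of $G$. Since the nonzero eigenvalues of $G G\trans$ (resp.\ $G\trans G$) are the squared singular values of $G$, the $\lambda^2$-eigenspace of $G G\trans$ coincides with the span of the columns of $U$, and that of $G\trans G$ with the span of the columns of $V$, where $U$ and $V$ are exactly the left and right singular vectors associated with the singular value $\lambda$, as in the statement. Because $U$ and $V$ have orthonormal columns, the inclusions of the column spaces let me write $A_* = U C_1$ and $B_* = V C_2$ for some coefficient matrices $C_1, C_2 \in \R^{s \times r}$.

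Finally I would identify the two coefficient matrices. The singular-pair structure gives the identities $G V = \lambda U$ and $G\trans U = \lambda V$. Plugging $A_* = U C_1$ and $B_* = V C_2$ into the optimality condition $G B_* = -\lambda A_*$ produces $\lambda U C_2 = -\lambda U C_1$; left-multiplying by $U\trans$ and using $U\trans U = I$ gives $C_2 = -C_1$, so setting $C := C_1$ yields $A_* = U C$ and $B_* = -V C$. The remaining equation $G\trans A_* = -\lambda B_*$ is automatically consistent and provides a useful sanity check. The only points requiring care, rather than genuine obstacles, are the correct column-wise pairing of $U$ and $V$ as singular pairs (so that $GV=\lambda U$ holds exactly) and the degenerate case in which $\lambda$ is not a singular value of $G$ at all, where $s=0$ and the eigenvalue relations force $A_*=B_*=0$, consistent with $U$, $V$ and $C$ being empty.
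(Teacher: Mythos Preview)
Your proof is correct and follows essentially the same route as the paper's: both derive the first-order conditions, eliminate one block to obtain the eigenvalue relation $GG^\top A_* = \lambda^2 A_*$ (the paper writes this as $(I - \tfrac{1}{\lambda^2}GG^\top)A_* = 0$), identify the relevant eigenspace via the SVD of $G$, and then substitute back into the original conditions to conclude $C_2=-C_1$. Your phrasing is slightly more streamlined, but the logic is identical.
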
 
%
This result is instrumental in deriving a necessary and sufficient condition to determine whether a  stationary point for $\glar$ is actually a global minimizer. Indeed, since $\fla$ is convex, we can leverage on the first order condition for global optimality, stating that the matrix $W_* = A_*B_*\trans$ is a global minimizer for $\fla$ (and by \autoref{prop:equivalence} also $(A_*,B_*)$ for $\glar$) if and only if the zero matrix belongs to its subdifferential  (see e.g. \cite{Bauschke2010}). Studying this inclusion leads to the following theorem.
\begin{restatable}[A Criterion for Global Optimality]{theorem}{TCriterion}\label{thm:criterion}
Let $(A_*,B_*)$ be a critical point of $\glar$. Then $A_*B_*\trans$ is a minimizer for $\fla$ if and only if $\|\grad \ell(A_* B_*\trans)\|\leq\lambda$.
\end{restatable}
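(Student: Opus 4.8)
The plan is to exploit the convexity of $\fla$ and reduce the statement to a subdifferential inclusion, which we then decode using \autoref{thm:characterization}. Since $\fla$ is convex, $W_* = A_*B_*\trans$ is a global minimizer of $\fla$ if and only if $0 \in \partial\fla(W_*) = \grad\ell(W_*) + \lambda\,\partial\|W_*\|_*$ (see e.g. \cite{Bauschke2010}). Recalling the classical characterization of the trace-norm subdifferential, for $W$ with reduced SVD having sign matrix $U_W V_W\trans$ one has $\partial\|W\|_* = \{U_W V_W\trans + N : U_W\trans N = 0,\ N V_W = 0,\ \|N\|\le 1\}$. Hence the whole statement amounts to deciding when $-\tfrac1\lambda\grad\ell(W_*)$ is a valid subgradient of the trace norm at $W_*$, and I would organize the proof around the two implications.

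The necessity direction ($\Rightarrow$) is the easy half and I would dispatch it first using only duality: every $H\in\partial\|W_*\|_*$ has spectral norm at most one, since the spectral norm is dual to the trace norm (concretely $U_W V_W\trans$ and $N$ act on orthogonal column/row spaces, so $\|U_W V_W\trans + N\| = \max(1,\|N\|)\le 1$). Thus if $0\in\partial\fla(W_*)$ then $\grad\ell(W_*) = -\lambda H$ for some such $H$, giving $\|\grad\ell(W_*)\| = \lambda\|H\|\le\lambda$.

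For sufficiency ($\Leftarrow$) I would invoke \autoref{thm:characterization} to write $A_* = UC$ and $B_* = -VC$, where $U,V$ collect the left/right singular vectors of $\grad\ell(W_*)$ with singular value exactly $\lambda$. Splitting the SVD of $\grad\ell(W_*)$ into this block and the remainder yields $\grad\ell(W_*) = \lambda U V\trans + \lambda N'$ with $U\trans N' = 0$ and $N' V = 0$; since the two blocks act on orthogonal spaces, $\|\grad\ell(W_*)\|\le\lambda \iff \|N'\|\le 1$. It then remains to show that $\|N'\|\le 1$ forces the candidate $H := -U V\trans - N'$ to belong to $\partial\|W_*\|_*$, for this gives $\grad\ell(W_*) + \lambda H = 0$, i.e. $0\in\partial\fla(W_*)$, so $W_*$ minimizes $\fla$ as claimed.

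The main obstacle, and the only genuinely delicate step, is that $\partial\|W_*\|_*$ is defined through the SVD of $W_*$ rather than that of $\grad\ell(W_*)$, and the two need not have equal rank: from $W_* = -U(CC\trans)V\trans$ one sees $\rank(W_*) = \rank(CC\trans)$ may be strictly smaller than $s$ when $CC\trans$ is singular. I would therefore diagonalize $CC\trans = Q\Lambda Q\trans$ to obtain a genuine reduced SVD $W_* = \tilde U_k\Lambda_k\tilde V_k\trans$ with $\tilde U = UQ$, $\tilde V = -VQ$ restricted to the $k$ positive eigenvalues. Using $\tilde U\tilde V\trans = -UV\trans$, the candidate rewrites as $H = \tilde U_k\tilde V_k\trans + N$ with $N = \sum_{i>k}\tilde u_i\tilde v_i\trans - N'$, and verifying $H\in\partial\|W_*\|_*$ reduces to three checks: $\tilde U_k\trans N = 0$ and $N\tilde V_k = 0$ follow from $U\trans N' = 0$, $N'V = 0$ and orthonormality of the $\tilde u_i,\tilde v_i$, while $\|N\|\le 1$ holds because the leftover orthonormal term and $N'$ live on mutually orthogonal column/row spaces, so $\|N\| = \max(1,\|N'\|)\le 1$ exactly under the hypothesis $\|N'\|\le 1$. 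Handling this rank drop and the associated orthogonality bookkeeping carefully is what makes the argument go through.
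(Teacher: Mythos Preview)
Your proposal is correct and follows essentially the same route as the paper: reduce to the subdifferential inclusion $-\tfrac{1}{\lambda}\nabla\ell(W_*)\in\partial\|W_*\|_*$, invoke \autoref{thm:characterization} to get $A_*=UC$, $B_*=-VC$, and then diagonalize $CC\trans$ to align the SVD of $W_*$ with that of $\nabla\ell(W_*)$, carefully handling the possible rank drop. The only cosmetic difference is that the paper packages the subdifferential check via Lewis's simultaneous-SVD characterization (Lemma~\ref{lem:lewis}) rather than Watson's $U_WV_W\trans+N$ form, but the substantive work---building a compatible SVD of $W_*$ and verifying the spectral-norm bound on the residual block---is identical.
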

This result provides a natural strategy to determine whether a descent method minimizing $\glar$ has converged to a global minimizer, that is we evaluate the operator norm of the gradient, denoted $\|\grad \ell(A_*B_*\trans)\|$, and then check whether it is larger than $\lambda$. For large matrices this operation can be performed efficiently by using approximation methods, e.g. power iteration \cite{woodruff2014}. Note that in general it is an NP-hard problem to determine whether a critical point of a non-convex function is actually a  global minimizer \cite{murty1987}; it is only because of the relation with the convex function $\fla$ that in this case it is possible to perform such check in polynomial time.

\subsection{Escape Directions and Global Convergence}
\label{sec:escape}

In this section, we observe that for any critical point of $\glar$ which is not a global minimizer, it is always possible to either find a direction to escape from it or alternatively to increase $r$ by one and to find a decreasing direction for $\glarplus$. This strategy is suggested by the following result.
\begin{restatable}[Escape Direction from Critical Points]{proposition}{PEscape}\label{thm:escape}
With the same notation of \autoref{thm:characterization}, assume $\textrm{rank}(A_*) = \textrm{rank}(B_*) < r$ and $\|\grad \ell(A_*B_*\trans)\| = \mu >\lambda$. Then, $(A_*,B_*)$ is a so-called strict saddle point for $\glar$, namely the Hessian of $\glar$ at~$(A_*,B_*)$~has at least one negative eigenvalue. In particular, there exists $q\in\R^r$ such that $A_*q = 0$, $B_*q = 0$ and if $u\in\R^n$ and $v \in\R^m$ are the left and right singular vectors of~$\grad \ell(A_*B_*\trans)$~with 
singular value equal to $\mu$, then $\glar$ decreases locally at $(A_*,B_*)$ along the direction $(uq^\top,-vq^\top)$.
\end{restatable}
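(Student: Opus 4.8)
The plan is to construct an explicit one-parameter curve through $(A_*,B_*)$ along which $\glar$ strictly decreases, and then read off the strict-saddle property from the sign of the second-order coefficient. First I would produce the common null vector $q$. By \autoref{thm:characterization} we may write $A_* = UC$ and $B_* = -VC$ with $U,V$ having orthonormal columns; since orthonormal columns act injectively, $\rank(A_*) = \rank(B_*) = \rank(C)$, and the hypothesis $\rank(A_*)=\rank(B_*)<r$ forces $\rank(C)<r$. Hence $C$ has a nontrivial kernel, and I pick a unit vector $q\in\R^r$ with $Cq=0$, which yields at once $A_* q = UCq = 0$ and $B_* q = -VCq = 0$.

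Writing $G := \grad \ell(A_*B_*\trans)$ and letting $u,v$ be the unit left/right singular vectors of $G$ for the value $\mu$ (so $Gv=\mu u$, $G\trans u = \mu v$), I would consider $A(t)=A_*+t\,uq\trans$, $B(t)=B_*-t\,vq\trans$ and expand $\glar(A(t),B(t))$ to order $t^2$. The decisive simplification is that the linear-in-$t$ part of the product collapses: the two cross terms are $(uq\trans)B_*\trans = u(B_* q)\trans$ and $A_*(vq\trans)\trans = (A_* q)v\trans$, both zero, so that $A(t)B(t)\trans = A_*B_*\trans - t^2\,uv\trans$ (using $\|q\|=1$). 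Consequently the loss only feels a second-order perturbation, and a first-order Taylor expansion of $\ell$ gives $\ell(A(t)B(t)\trans) = \ell(A_*B_*\trans) - t^2\langle G,uv\trans\rangle + O(t^4)$, the remainder being genuinely $O(t^4)$ since the perturbation is itself $O(t^2)$ and $\ell$ has Lipschitz gradient. Finally $\langle G,uv\trans\rangle = u\trans G v = \mu$ by the singular-value relation.

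For the regularizer I would expand the two Frobenius norms, where the cross terms vanish once more because $A_* q = 0$ and $B_* q = 0$: indeed $\langle A_*, uq\trans\rangle = (A_* q)\trans u = 0$ and likewise $\langle B_*, vq\trans\rangle = 0$, so $\|A(t)\|_F^2 = \|A_*\|_F^2 + t^2$ and $\|B(t)\|_F^2 = \|B_*\|_F^2 + t^2$. Collecting the two contributions yields
\[
\glar(A(t),B(t)) = \glar(A_*,B_*) + (\lambda - \mu)\,t^2 + O(t^4).
\]
Since $\mu>\lambda$, the coefficient $\lambda-\mu$ is strictly negative, so $\glar$ strictly decreases along $(uq\trans,-vq\trans)$ for small $t\neq 0$; equivalently, the second directional derivative of $\glar$ at $(A_*,B_*)$ in this direction equals $2(\lambda-\mu)<0$, exhibiting a negative eigenvalue of the Hessian and proving $(A_*,B_*)$ is a strict saddle point.

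The computation is routine once the direction is fixed; the only genuine content is choosing the perturbation so that every first-order term cancels. I expect the single point requiring care to be the vanishing of the linear term in $A(t)B(t)\trans$, which is exactly what demotes the loss to order $t^2$ and lets the (positive) regularizer $\lambda t^2$ compete on equal footing with the (negative) loss contribution $-\mu t^2$. No further obstacle is anticipated: the strict-saddle conclusion is immediate from the sign of the $t^2$ coefficient, and together with \autoref{thm:criterion} this is precisely the escape direction needed to decrease $\glarplus$ from $([A_*~0],[B_*~0])$.
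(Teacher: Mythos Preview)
Your proposal is correct and follows essentially the same argument as the paper: both construct the null vector $q$ from the rank-deficiency of $C$ in the characterization $A_*=UC,\ B_*=-VC$, perturb along $(uq^\top,-vq^\top)$, observe that the linear terms in $A(t)B(t)^\top$ and in the Frobenius norms vanish, and conclude from the second-order coefficient $2(\lambda-\mu)<0$. The only cosmetic difference is that the paper computes $L''(0)$ by differentiating $L(\gamma)$ directly, whereas you use a Taylor expansion with an $O(t^4)$ remainder; the content is identical.
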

A direct consequence of the result above is that an optimization strategy can remain trapped only at global minimizers of $\glar$ or at critical points $(A_*,B_*)$ for which $A_*$ and $B_*$ have full rank (since we can always escape from rank deficient ones). If the latter happens, \autoref{thm:escape} suggests the strategy adopted in this work, namely to increase the problem dimension to $r+1$ and consider the ``inflated'' point $([A_*~ 0], [B_*~ 0])$. 
Indeed, at such point, $\glarplus$ attains the same value as $\glar(A_*B_*^\top)$ and it is straightforward to verify that $([A_*~0],[B_*~0])$ is still a critical point for $\glarplus$. Since matrices $[A_*~0]$ and $[B_*~0]$ have now rank $<r+1$, we can apply \autoref{thm:escape} to find a direction along which $\glarplus$ decreases. This procedure will stop for $r>\min(n,m)$ since $\textrm{rank}(A_*) \leq \min(n,m) < r$ and we can therefore apply \autoref{thm:escape} to always escape from critical points until we reach a global minimizer (this fact can be actually improved to hold also for $r=\min(n,m)$ as we see in the following). 

Note however that in general, if the number of non-global critical points is infinite, it is not guaranteed that such strategy will converge to the global minimizer. However, since by \autoref{thm:escape} every such critical point is a {\em strict} saddle point, we can leverage on previous results from the non-convex optimization literature (see \cite{lee2016} and references therein) in order to prove the following result.
\begin{restatable}[Convergence to Global Minimizers of $\glar$]{theorem}{TConvergence}\label{thm:global-convergence}
Let $r\geq\min(n,m)$. Then the set of starting points $(A_0,B_0)$ for which GD does not converge to a global minimizer of $\glar$ has measure zero.
\end{restatable}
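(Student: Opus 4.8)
The plan is to reduce the statement to the so-called \emph{strict saddle property} for $\glar$---that every critical point is either a global minimizer or a strict saddle point---and then to invoke the result of \cite{lee2016}, which guarantees that gradient descent escapes strict saddles from all but a measure-zero set of initializations. Since under the strict saddle property the only critical points that are \emph{not} strict saddles are global minimizers, combining these two facts gives exactly the claim.

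To establish the strict saddle property, let $(A_*,B_*)$ be a critical point of $\glar$ that is \emph{not} a global minimizer. By \autoref{thm:criterion} this is equivalent to $\mu := \|\grad \ell(A_*B_*\trans)\| > \lambda$. I then invoke \autoref{thm:characterization}: writing $U,V$ for the matrices whose $s$ columns are the left/right singular vectors of $\grad\ell(A_*B_*\trans)$ with singular value $\lambda$, we have $A_* = UC$ and $B_* = -VC$ for some $C\in\R^{s\times r}$. Because $U$ and $V$ have orthonormal columns they are injective, so
\[
  \rank(A_*) = \rank(C) = \rank(B_*),
\]
which already supplies the hypothesis $\rank(A_*)=\rank(B_*)$ needed by \autoref{thm:escape}. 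The one remaining point is the strict inequality $\rank(A_*)<r$, and here is the counting step that makes the borderline case $r=\min(n,m)$ work: since $\mu>\lambda$, at least one singular value of $\grad\ell(A_*B_*\trans)$ strictly exceeds $\lambda$, so the number $s$ of singular values \emph{equal} to $\lambda$ obeys $s\le \min(n,m)-1$. Hence
\[
  \rank(A_*) = \rank(C) \le s \le \min(n,m)-1 < \min(n,m) \le r,
\]
so that $\rank(A_*)=\rank(B_*)<r$. \autoref{thm:escape} then applies and certifies that $(A_*,B_*)$ is a strict saddle point, completing the strict saddle property. Feeding this into \cite{lee2016}, whose stable-manifold argument shows that the gradient step map is a local diffeomorphism for a fixed step $\gamma<1/L$ and that the set of initializations converging to a strict saddle has Lebesgue measure zero, one concludes that every convergent run started off this null set must converge to a global minimizer.

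The main obstacle I anticipate is not the algebra above but the bookkeeping needed to apply \cite{lee2016} rigorously. Its cleanest form presumes a globally Lipschitz gradient, whereas $\grad\glar$ is only \emph{locally} Lipschitz because of the bilinear term $AB\trans$; moreover, saddle avoidance alone does not guarantee that the iterates converge at all. I would resolve both issues through coercivity: since the loss $\ell$ is bounded below in the settings of interest and $\frac{\lambda}{2}(\|A\|_F^2+\|B\|_F^2)$ is coercive, $\glar$ has compact sublevel sets. A monotone descent step then keeps the iterates inside the compact sublevel set of the initial point, on which $\grad\glar$ is Lipschitz; one applies the saddle-avoidance argument on this invariant region and combines it with the standard subsequential-convergence theory for descent methods on coercive $C^2$ functions to conclude that the iterates genuinely converge to a critical point---which, off the null set, the strict saddle property forces to be a global minimizer.
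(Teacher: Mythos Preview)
Your approach is essentially the paper's: establish the strict saddle property and invoke \cite{lee2016}. The paper's proof is terser---it simply asserts that \autoref{thm:escape} gives the strict saddle property for $r\geq\min(n,m)$ and then cites Thm.~4.1 of \cite{lee2016}---whereas you actually supply the counting step the paper only alludes to in the text preceding the theorem (``this fact can be actually improved to hold also for $r=\min(n,m)$''). Your observation that $\mu>\lambda$ forces $s\le\min(n,m)-1$, hence $\rank(A_*)\le s<r$, is exactly the missing link.

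Your caveat about the global Lipschitz hypothesis is well taken and in fact more careful than the paper. The paper asserts ``since the error function $\ell$ is twice differentiable with gradient Lipschitz continuous also $\glar$ is,'' but this is not literally true: the bilinear composition $(A,B)\mapsto\ell(AB\trans)$ has only locally Lipschitz gradient. Your coercivity argument---that the sublevel sets of $\glar$ are compact (via the $\tfrac{\lambda}{2}(\|A\|_F^2+\|B\|_F^2)$ term and boundedness below of $\ell$), so a descent method stays in a compact invariant region on which $\grad\glar$ is Lipschitz---is the honest way to close this gap, and it also supplies the subsequential convergence that \cite{lee2016} on its own does not.
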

In particular, \autoref{thm:global-convergence} suggests to initialize the optimization method used to minimize $\glar$ by applying a small perturbation to the initial point $(A_0,B_0)$ via additive noise according to a distribution that is absolutely continuous with respect to the Lebesgue measure of $\R^{n \times r}\times\R^{m \times r}$ (e.g. a Gaussian). This  perturbation guarantees such initial point to not be in the set of points for which GD converges to strict saddle point and therefore that the meta-algorithm considered in this work converges to a global minimizer. We make this statement more precise in the next section.

\subsection{A Meta-algorithm to Minimize $\fla$}
\label{sec:meta-algorithm}
We can now formally present the meta-algorithm outlined at the beginning of \secref{sec:analysis} to find a solution of the trace norm regularization problem (\ref{eq:original}) by minimizing $\glar$ in \eqref{eq:factorized-problem} for increasing values of $r$. 
\begin{algorithm}[t]
   \caption{\textsc{Meta-Algorithm}}
   \vspace{-.031truecm}
   \label{alg:meta-algorithm}
\begin{algorithmic}
   \State ~
   \State {\bfseries Input:} $\lambda>0$, $\epsilon_{\textrm{conv}}>0$ convergence tolerance, $\epsilon_{\textrm{crit}}>0$ global criterion tolerance.
   \State {\bfseries Initialize:} Set $r=1$. Sample $A_0'\in\R^n$ and $B_0'\in\R^m$ randomly.
   \State {\bfseries For} $r = 1$ to $\min(n,m)$
   \State \quad $(A_r,B_r) =$ {\sc OptimizationAlgorithm}($A_{r-1}',B_{r-1}',\glar,\epsilon_\textrm{conv}$)
   \State \quad {\bfseries If} $\|\grad \ell(A_r B_r\trans)\|\leq\lambda + \epsilon_{\textrm{crit}}$
   \State \quad \quad {\bfseries Break} 
   \State \quad $(A_r',B_r') = ([A_r~ u], [B_r~ v])$ with $u\in\R^n, v\in\R^m$ sampled randomly.
   \State \quad $r = r + 1$
   \State {\bfseries End}
   \State {\bfseries Return $(A_r,B_r)$} 
\end{algorithmic}
\vspace{-.031truecm}
\end{algorithm}
Algorithm \ref{alg:meta-algorithm} proceeds by iteratively applying the descent method {\sc OptimizationAlgorithm} (e.g. GD) to minimize $\glar$ while increasing the estimated rank $r$ one step at the time. Whenever the optimization algorithm converges to a critical point $(A_r,B_r)$ of $\glar$ (within a certain tolerance $\epsilon_{\textrm{conv}}$), Algorithm \ref{alg:meta-algorithm} verifies whether the global optimality criterion has been activated (again within a certain tolerance $\epsilon_{\textrm{crit}}$). If this is the case, $(A_r,B_r)$ is a global minimizer and we stop the algorithm. Otherwise we ``inflate'' the two factor matrices by one column each and repeat the procedure. The new column is initialized randomly, since by \autoref{thm:escape} we know that we will not converge again to $([A_r~ 0],[B_r~0])$ because it is not full rank. A more refined approach would be to choose $u$ and $-v$ to be the singular vectors of $\grad \ell(A_r B_r\trans)$ associated to the highest singular value. For the sake of brevity we provide an example of such strategy in the Appendix.
However note that we still need to apply a random perturbation to the step in the descent direction in order to invoke \autoref{thm:global-convergence} and be guaranteed to not converge to strict saddle points. As a direct corollary to \autoref{thm:global-convergence} we have
\begin{corollary}
\label{cor:6}
Algorithm \ref{alg:meta-algorithm} with GD as {\sc OptimizationAlgorithm} converges to a point $(A_*,B_*)$ such that $A_*B_*\trans$ is a global minimizer for $\fla$ with probability $1$.
\end{corollary}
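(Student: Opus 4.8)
The plan is to derive the claim from \autoref{thm:global-convergence} by a short case analysis on how Algorithm~\ref{alg:meta-algorithm} terminates, using \autoref{thm:criterion} to certify optimality upon early termination and \autoref{prop:equivalence} to lift optimality from $\glar$ back to $\fla$. The preliminary observation is that the rank $r_* = \rank(W_*)$ of any global minimizer $W_*$ of $\fla$ satisfies $r_* \leq \min(n,m)$; hence at every rank $r \geq r_*$ encountered by the algorithm, \autoref{prop:equivalence} guarantees that a global minimizer of $\glar$ produces, via its product, a global minimizer of $\fla$. It therefore suffices to show that, with probability one, the returned pair either satisfies the optimality criterion or is a global minimizer of $g_{\lambda,\min(n,m)}$.

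First I would dispatch the branch in which the loop exits through the {\bf Break} statement at some rank $r \leq \min(n,m)$. By construction this occurs exactly when $\|\grad \ell(A_r B_r\trans)\| \leq \lambda + \epsilon_{\textrm{crit}}$ at a point $(A_r,B_r)$ that GD has driven to a critical point of $\glar$ (within $\epsilon_{\textrm{conv}}$). In the idealized exact regime $\epsilon_{\textrm{conv}} = \epsilon_{\textrm{crit}} = 0$, \autoref{thm:criterion} then identifies $A_r B_r\trans$ as a global minimizer of $\fla$. This branch is purely deterministic: since the criterion of \autoref{thm:criterion} is both necessary and sufficient, the algorithm can never stop early at a non-optimal point.

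The substantive branch is the one in which the criterion never activates and the loop runs to its terminal value $r = \min(n,m)$, returning the pair $(A_r,B_r)$ produced there by GD. I would apply \autoref{thm:global-convergence} to this final call. Its starting point is the inflated and perturbed pair passed on from rank $\min(n,m)-1$; because the perturbation is sampled from a distribution absolutely continuous with respect to the Lebesgue measure on $\R^{n \times r}\times\R^{m \times r}$, the initialization admits a Lebesgue density and thus avoids, with probability one, the measure-zero exceptional set of \autoref{thm:global-convergence}. Consequently GD converges almost surely to a global minimizer of $g_{\lambda,\min(n,m)}$, which by the preliminary observation and $\min(n,m) \geq r_*$ yields a global minimizer of $\fla$. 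Taking the union over the two branches, the algorithm returns a global minimizer with probability one.

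The step I expect to demand the most care is the measure-zero argument at the terminal rank. One must verify that the randomness actually feeding the final GD run is full-dimensional and not confined to the single column appended during inflation: a perturbation supported only on the new coordinates would place the initialization on a random lower-dimensional slice, and the trace of the bad set on such a slice need not be negligible. This is exactly why a full random perturbation of the descent step is prescribed before the corollary, restoring a density on the whole of $\R^{n \times r}\times\R^{m \times r}$ and legitimizing the invocation of \autoref{thm:global-convergence}. A second, minor point is that \autoref{thm:global-convergence} governs the idealized descent dynamics; the tolerances $\epsilon_{\textrm{conv}},\epsilon_{\textrm{crit}}$ are practical relaxations that leave the exact-arithmetic statement of the corollary intact.
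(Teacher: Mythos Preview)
Your proposal is correct and follows the same route as the paper, which treats \autoref{cor:6} as an immediate consequence of \autoref{thm:global-convergence} without a separate argument. You simply make explicit the two-branch case analysis (early termination certified by \autoref{thm:criterion} versus reaching $r=\min(n,m)$ and invoking \autoref{thm:global-convergence} together with \autoref{prop:equivalence}) and correctly flag the need for a full-dimensional perturbation, which the paper mentions only informally in the paragraph preceding the corollary.
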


\subsection{Convergence Rates}
\label{sec:rates}
In this section, we touch upon 
the question of convergence rates for optimization schemes applied to problem (\ref{eq:factorized-problem}). For simplicity, we focus on GD, but our result generalizes to other first order methods. We provide upper bounds to the number of iterations required to guarantee that GD iterates are $\epsilon$ close to a critical point of the target function. 
By standard convex analysis results (see ~\cite{boyd2004}) it is known that GD applied to a differentiable convex function is guaranteed to have sublinear convergence of $O(1/\epsilon)$ comparable to that of PFB. However, since $\glar$ is non-convex, here we need to rely on more recent results that investigated the application of first order methods to functions satisfying the so-called {\em Kurdyka-Lojasiewicz (KL) inequality}~\cite{attouch2010,bolte2010}. 
\begin{definition}[Kurdyka-Lojasiewicz Inequality]\label{def:kl}
A differentiable function $g:\R^d\to\R$ is said to satisfy the Kurdyka-Lojasiewicz inequality at a critical point $x_*\in\R^d$ if there exists a neighborhood $U \subseteq \R^d$ and constants $\gamma,\epsilon>0$ and $\alpha\in[0,1)$ such that, for all $x \in U \cap \{x :  g(x_*) < g(x) < g(x_*) + \epsilon \}$,
\begin{equation}\label{eq:kl}
    \gamma |g(x) - g(x_*)|^\alpha < \|\grad g(x)\|_F.
\end{equation}
\end{definition}
The KL inequality is a measure of how large is the gradient of the function in the neighborhood of a critical point. This allows us to derive convergence rates for GD methods that depend on the constant $\alpha$. In particular, as a corollary to \cite{lee2016} (see also \cite{attouch2010}) we obtain the following result.
%
\begin{corollary}[Convergence Rate of Gradient Descent]\label{thm:rates}
Let $(A_k,B_k)_{k\in\N}$ a sequence produced by GD method applied to $\glar$. If $\glar$ satisfies the KL inequality for some $\alpha \in[0,1)$, then there exists a critical point $(A_*,B_*)$ of $\glar$ and constants $C>0$, $b\in(0,1)$ such that
\begin{equation}
\label{eq:gd-rate}
\|(A_k,B_k) - (A_*,B_*)\|_F^2 \leq \left\{
\begin{array}{ll}
Cb^k & \text{if }  \alpha \in(0,1/2],\\
C k^{-\frac{1-\alpha}{2\alpha - 1}} & \text{if } \alpha\in(1/2,1).
\end{array} \right.
\end{equation}
Furthermore, if $\alpha = 0$ convergence is achieved in a finite number of steps.
\end{corollary}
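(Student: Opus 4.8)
The plan is to recognize this statement as an instance of the general convergence-rate theory for descent methods on functions satisfying the Kurdyka--Łojasiewicz (KL) inequality, as developed in \cite{attouch2010,bolte2010} and used in \cite{lee2016}. The whole argument reduces to checking that the GD sequence on $\glar$ fits the hypotheses of that abstract theory, and then reading off the rates from the KL exponent $\alpha$ of \autoref{def:kl}.

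First I would record the regularity of $\glar$. Since $\ell$ is twice differentiable with Lipschitz continuous gradient and $(A,B)\mapsto AB\trans$ is bilinear, $\glar$ is $C^1$ with a gradient that is Lipschitz on every bounded subset of $\R^{n\times r}\times\R^{m\times r}$. Next I would confine the iterates to such a set: GD is a descent method, so $(A_k,B_k)$ remains in the sublevel set $\{\glar\le\glar(A_0,B_0)\}$, and this set is bounded because the quadratic term $\tfrac{\lambda}{2}(\|A\|_F^2+\|B\|_F^2)$ makes $\glar$ coercive for the losses of interest (e.g. the nonnegative $\ell$ in the matrix-completion and multitask examples; more generally whenever $\ell$ is bounded below, using $\|AB\trans\|_*\le\tfrac12(\|A\|_F^2+\|B\|_F^2)$ from \eqref{eq:variational-form}). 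On this compact set let $L$ be the Lipschitz constant of $\grad\glar$.

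With a step size $\gamma<1/L$ I would then verify the two standard descent conditions for the GD sequence of \eqref{eq:gd}. The \emph{sufficient decrease} inequality $\glar(A_{k+1},B_{k+1})\le\glar(A_k,B_k)-(\tfrac1\gamma-\tfrac{L}{2})\,\|(A_{k+1},B_{k+1})-(A_k,B_k)\|_F^2$ follows from the descent lemma, while the \emph{relative error} bound $\|\grad\glar(A_{k+1},B_{k+1})\|_F\le(L+\tfrac1\gamma)\,\|(A_{k+1},B_{k+1})-(A_k,B_k)\|_F$ follows from the identity $\|(A_{k+1},B_{k+1})-(A_k,B_k)\|_F=\gamma\|\grad\glar(A_k,B_k)\|_F$ together with Lipschitz continuity of $\grad\glar$ on the bounded set.

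Finally I would invoke the abstract theorem. Under the KL inequality, the sufficient-decrease and relative-error conditions force the trajectory to have finite length, $\sum_k\|(A_{k+1},B_{k+1})-(A_k,B_k)\|_F<\infty$; hence the sequence is Cauchy and converges to a single limit $(A_*,B_*)$, which is a critical point of $\glar$. The rate is then extracted by bounding the tail length via the differential inequality coming from the desingularizing function $s\mapsto s^{1-\alpha}$: the exponent $\alpha$ yields geometric decay for $\alpha\in(0,1/2]$, the polynomial decay reported in \eqref{eq:gd-rate} for $\alpha\in(1/2,1)$, and finite termination for $\alpha=0$. I expect the main obstacle to be the bookkeeping that confines the iterates to a bounded set so that a single Lipschitz constant $L$ is available (i.e. coercivity of $\glar$), since the abstract rate theorems are typically stated for a globally Lipschitz gradient; once this is in place, matching the desingularizing exponent to recover the precise rate is routine from \cite{attouch2010,lee2016}.
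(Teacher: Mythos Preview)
Your proposal is correct and matches the paper's approach: the paper offers no detailed proof of this corollary, stating it simply ``as a corollary to \cite{lee2016} (see also \cite{attouch2010}).'' What you have written is precisely the verification one would carry out to check that the GD iterates on $\glar$ meet the hypotheses of the abstract KL convergence-rate theorem from \cite{attouch2010,bolte2010}, after which the rates in \eqref{eq:gd-rate} are read off directly from the exponent $\alpha$. Your attention to coercivity (to secure a bounded sublevel set and hence a single Lipschitz constant) is the right place to be careful, since the paper only assumes $\ell$ is convex and $C^2$ with Lipschitz gradient, which does not by itself force $\ell$ to be bounded below; your remark that the examples of interest satisfy this is the appropriate caveat.
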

This result shows that depending on the constant $\alpha\in[0,1)$ appearing in the KL inequality, we can expect different convergence rates for GD applied to problem (\ref{eq:factorized-problem}).
Although, it is a challenging task to identify such constant or even provide an upper bound in specific instances, the class of functions satisfying the KL inequality is extremely large and includes both analytic and {\em semi-algebraic} functions, see e.g. \cite{attouch2009}. In the Appendix, we argue that if a function $\ell:\R^{n \times m}\to\R$ is a semi-algebraic, then also $\ell_r:\R^{n \times r}\times\R^{m\times r}\to\R$ such that $\ell_r(A,B) = \ell(AB\trans)$ is semi-algebraic. Therefore, in order to apply \autoref{thm:rates} it is sufficient that the error $\ell$ and the regularizer are semi-algebraic, a property which is verified by many commonly used error functions (or the associated loss functions, e.g. square, logistic) and regularizers (in particular the squared Frobenius norm).

\section{Experiments}
\label{sec:exps}

We report an empirical analysis of the meta-algorithm studied in this work. All experiments were conducted on an Intel Xeon E5-2697 V3 2.60Ghz CPU with 32GB RAM. We consider a matrix completion setting, with the loss
$\ell(W) = \|M\odot(Y - W)\|_F^2$ , where $Y$ is the matrix that we aim to recover and $M$ a binary matrix masking entries not available at training time. Below we briefly described the datasets used.

{\bf Synthetic.} We generated a $100\times100$ matrix $Y = AB^\top + E$ as a rank $10$ matrix product of two $100\times10$ 
matrices $A,B$ plus additive noise $E$; the entries for $A,B$ and $E$ were independently sampled from a standard normal distribution.

{\bf Movielens.} This dataset~\cite{harper2016} consists of different datasets for movie recommendation of increasing size. They all comprise a number of ratings (from 1 to 5) given by $n$ users on a database of $m$ movies, which are recorded as a $Y \in\R^{n \times m}$ matrix with missing entries. In this work we considered three such datasets of increasing size, namely Movielens $100k$ ($ml100k$) with $100$ thousand ratings from $n = 943$ users on $m= 1682$ movies, Movielens $1m$ ($ml1m$) with $\sim1$ million ratings, $n = 6040$ users and $m = 3900$ movies and Movielens $10m$ ($ml10m$), with $\sim10$ millions ratings, $n = 71567$ users and $10681$ movies.


\begin{figure}[!t]
  \centering
  \includegraphics[width=0.35\textwidth]{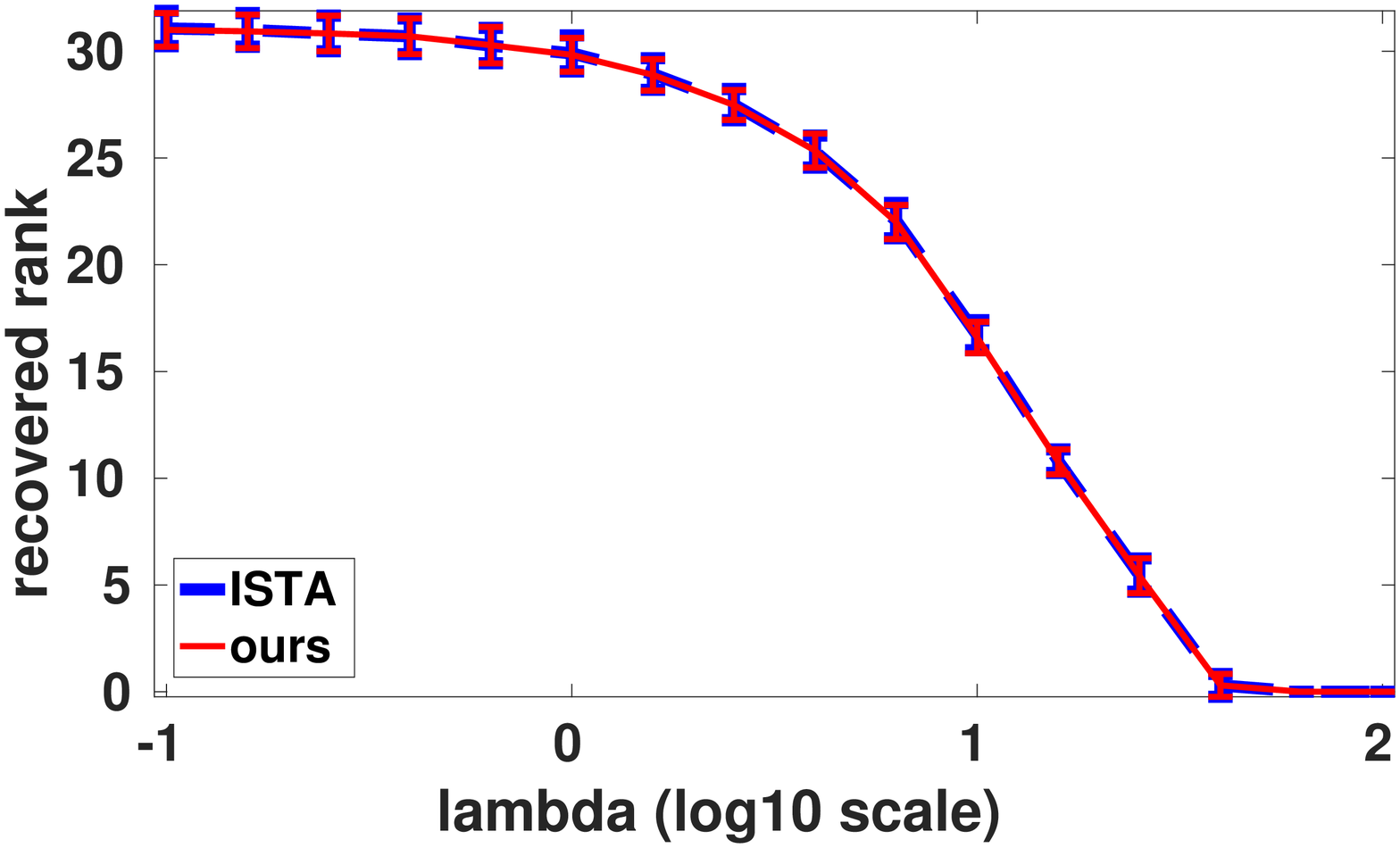}\qquad\quad%
  \includegraphics[width=0.35\textwidth]{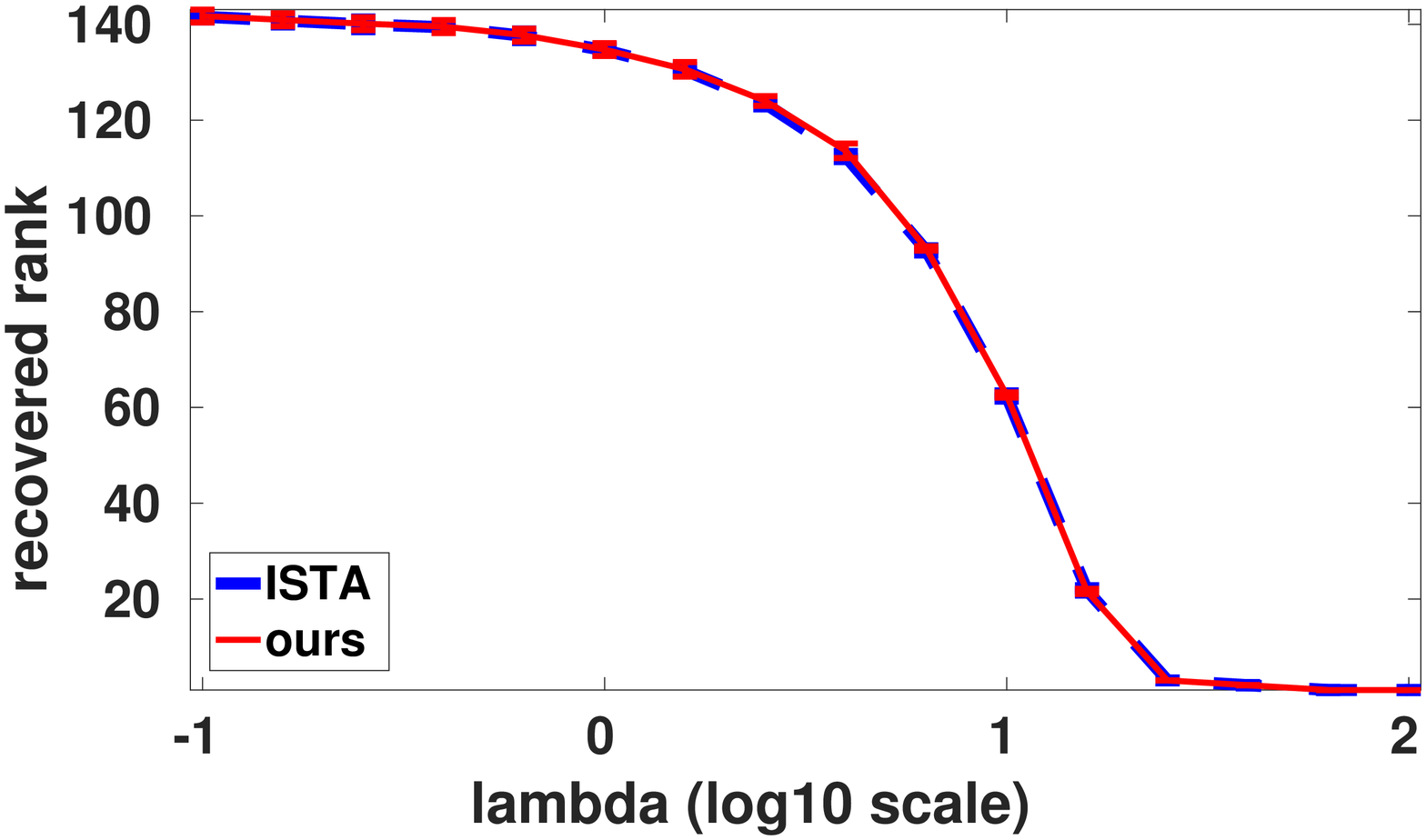}
  \caption{Rank of the solution for iterative soft thresholding algorithm (ISTA) and our method in Algorithm \ref{alg:meta-algorithm} varying lambda, on synthetic data (Left) and ml$100$k (Right). \label{fig:predRank}}
  \vspace{-.3truecm}
\end{figure}

\subsection{The Criterion for Global Optimality and the Estimated Rank}
\label{sec:exp-criterion}
The result in \autoref{thm:criterion} provides a necessary and sufficient criterion to determine whether Algorithm~\ref{alg:meta-algorithm} has achieved a global minimum for $\fla$. A natural question is to ask whether, in practice, such criterion will be satisfied for a much smaller rank $r$ than the one at which we are guaranteed convergence, namely $r = \min(n,m)$. To address this question we compared the solution achieved by our approach with the one obtained by iterative soft thresholding (ISTA)~(or proximal forward backward, see e.g. \cite{Bauschke2010}) on both synthetic and real datasets. Fig.~\ref{fig:predRank} reports the value of $r$ for which our meta-algorithm satisfied the criterion for global optimality and compares it with the rank of the ISTA solution for different values of $\lambda$. For the Synthetic dataset (Fig.~\ref{fig:predRank} Left ) we considered only $20\%$ of the generated matrix $Y$ entries for the optimization of $\fla$. For the real dataset (Fig.~\ref{fig:predRank} Right ) we considered $ml100k$ and sampled $50\%$ of each user's available ratings. For both synthetic and real experiments our meta-algorithm recovers the same rank than as that found by ISTA. However, our algorithm reaches such rank incrementally, exploiting the low rank factorization.
As we will see in the next section this results in a much faster convergence speed in practice.

\subsection{Large Scale Matrix Completion}
We compared the performance of our meta-algorithm with two state of the art methods, Active Newton (ALT)~\cite{hsieh2014} and Alternating Least Squares Soft Impute (ASL-SI)~\cite{hastie2015matrix} on the three Movielens datasets. We used $50\%, 25\%$ and $25\%$ of each user's ratings for training, validation and testing and repeated our experiments across $5$ separate trials to account for statistical variability in the sampling. Test error was measured in terms of the Normalized Mean Absolute Error (NMAE), namely the mean (entry-wise) absolute error on the test set, normalized by the maximum discrepancy $\max(Y_{ij}) - \min(Y_{ij})$ between entires in $Y$. As a reference of the behavior of the different methods, Fig.~\ref{fig:convANDerrors}, reports on a single trial the decrease of $\fla$ on training data and NMAE on the test set with respect to time for the best $\lambda$ chosen by validation. All methods where run until convergence and attained the same value of the objective function and same test error in all our trials. However, as it can be noticed, our meta-algorithm and ALS-SI seem to attain a much lower test error during earlier iterations. To better investigate this aspect, Table.~\ref{tab:results} reports results in terms of time, test error and estimated rank attained on average across the $5$ trials by the different methods {\em at the iteration with smallest validation error}.  As it can be noticed our meta-algorithm is generally on par with its competitors in terms of test error while being relatively faster and recovering low-rank solutions. This highlights an interesting byproduct of the meta-algorithm considered in this work, namely that by exploring candidate ranks incrementally, the method allows to find potentially better factorizations than trace norm regularization both in terms of test error and estimated rank. This fact can be empirically observed also for different values of $\lambda$ as we report in the Appendix.



\begin{figure}[!t]
    \centering
    \includegraphics[width=0.3\textwidth]{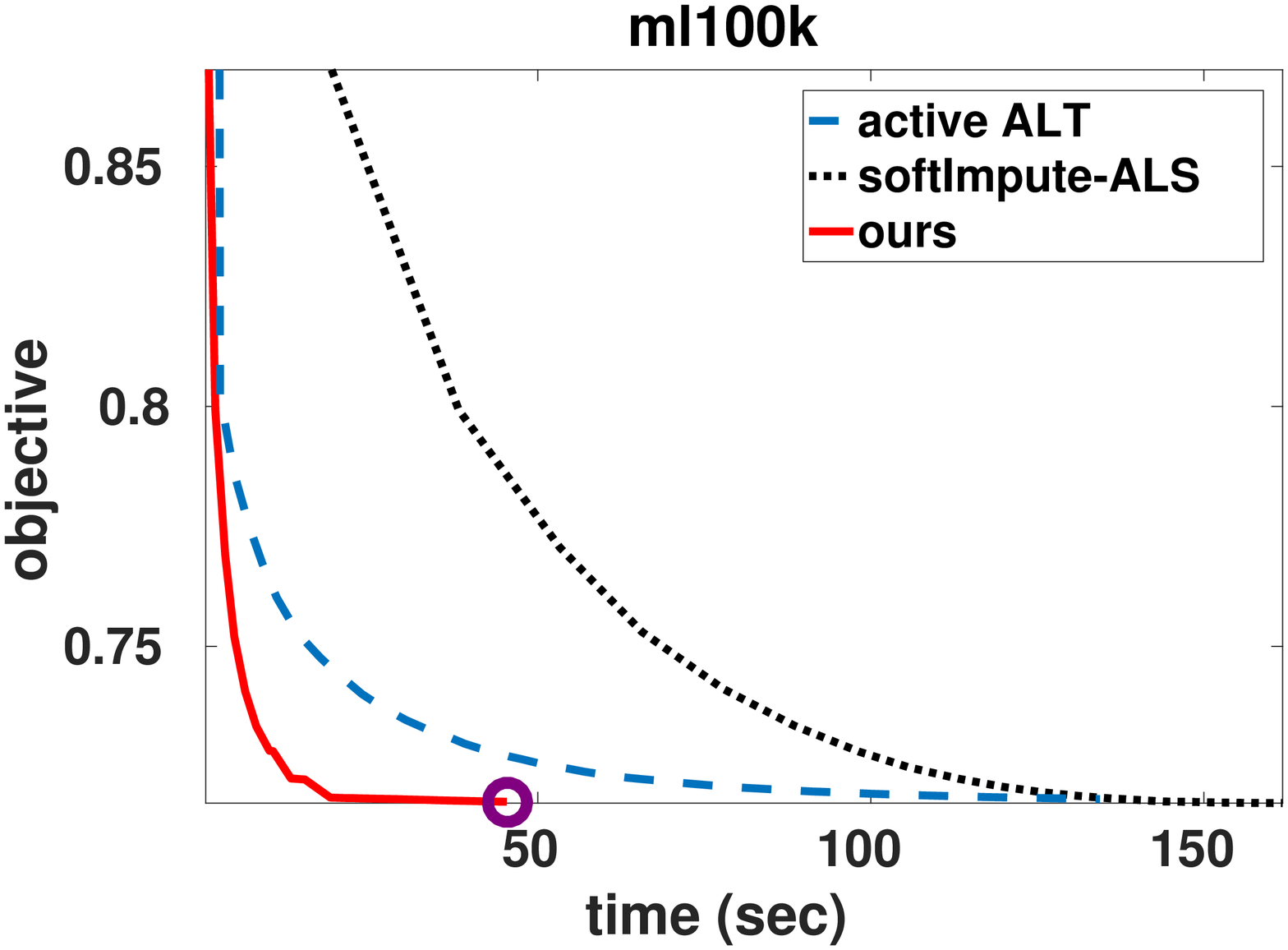}\quad%
    \includegraphics[width=0.3\textwidth]{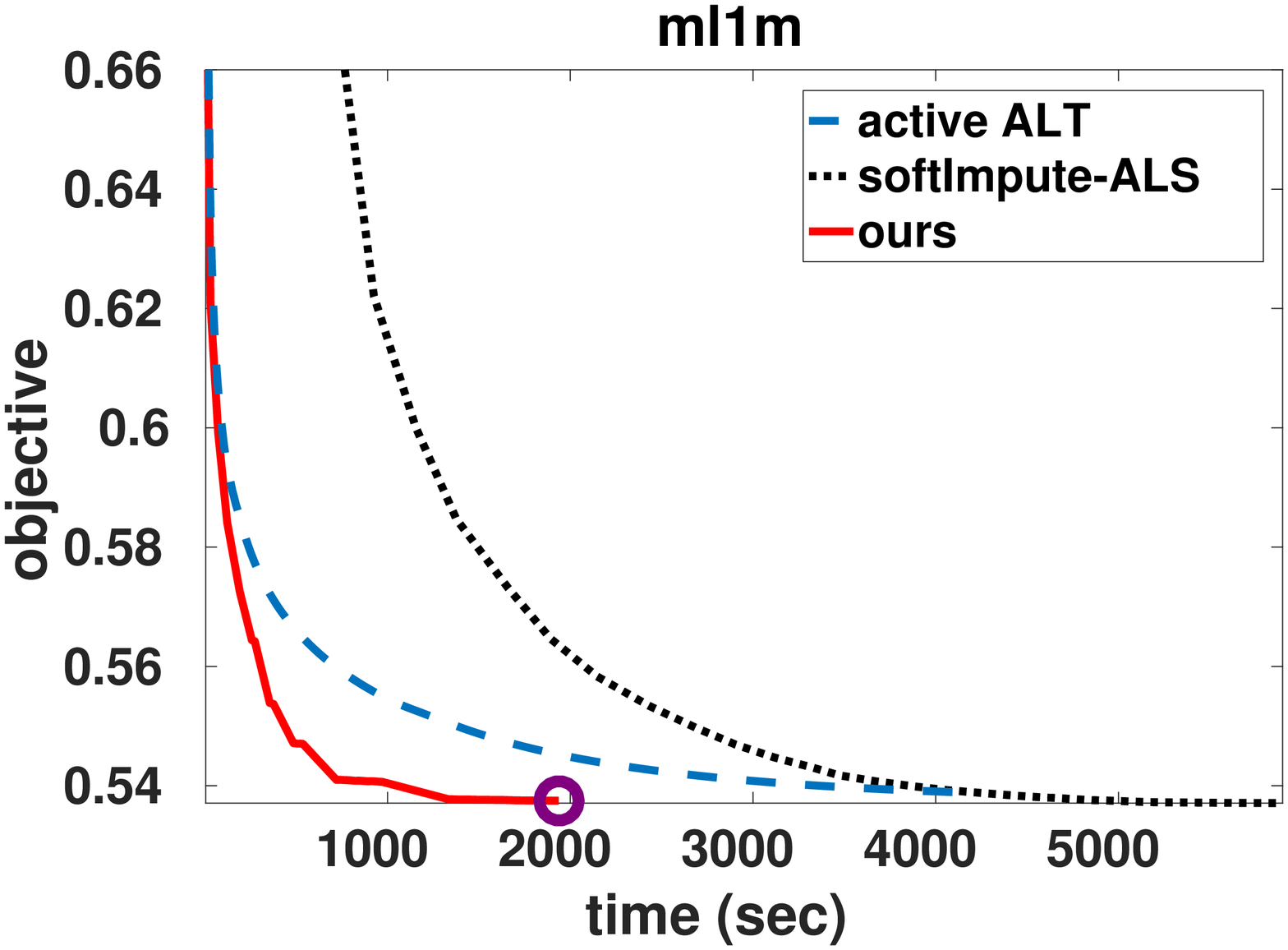}\quad%
    \includegraphics[width=0.3\textwidth]{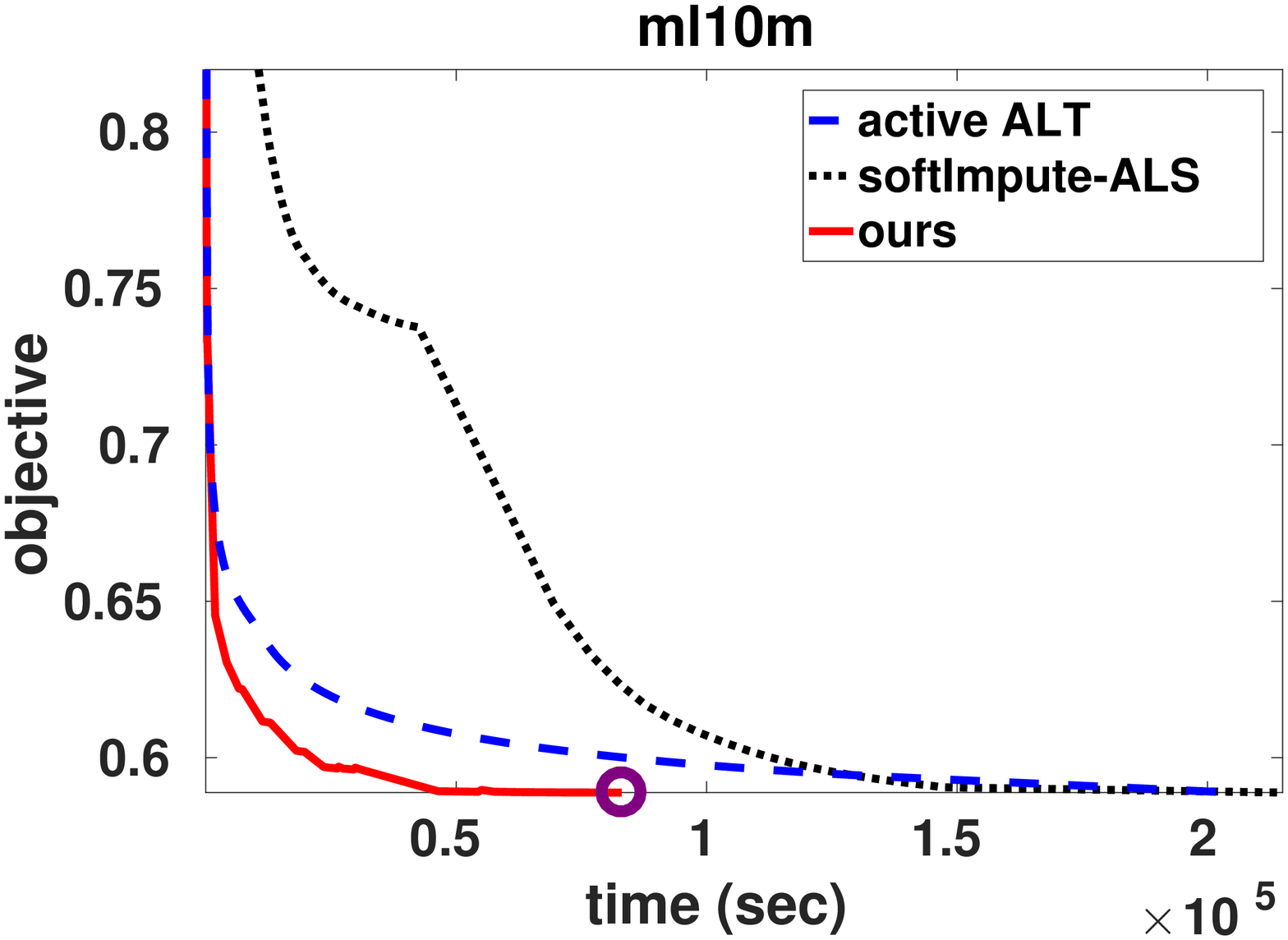}
    ~
    \includegraphics[width=0.3\textwidth]{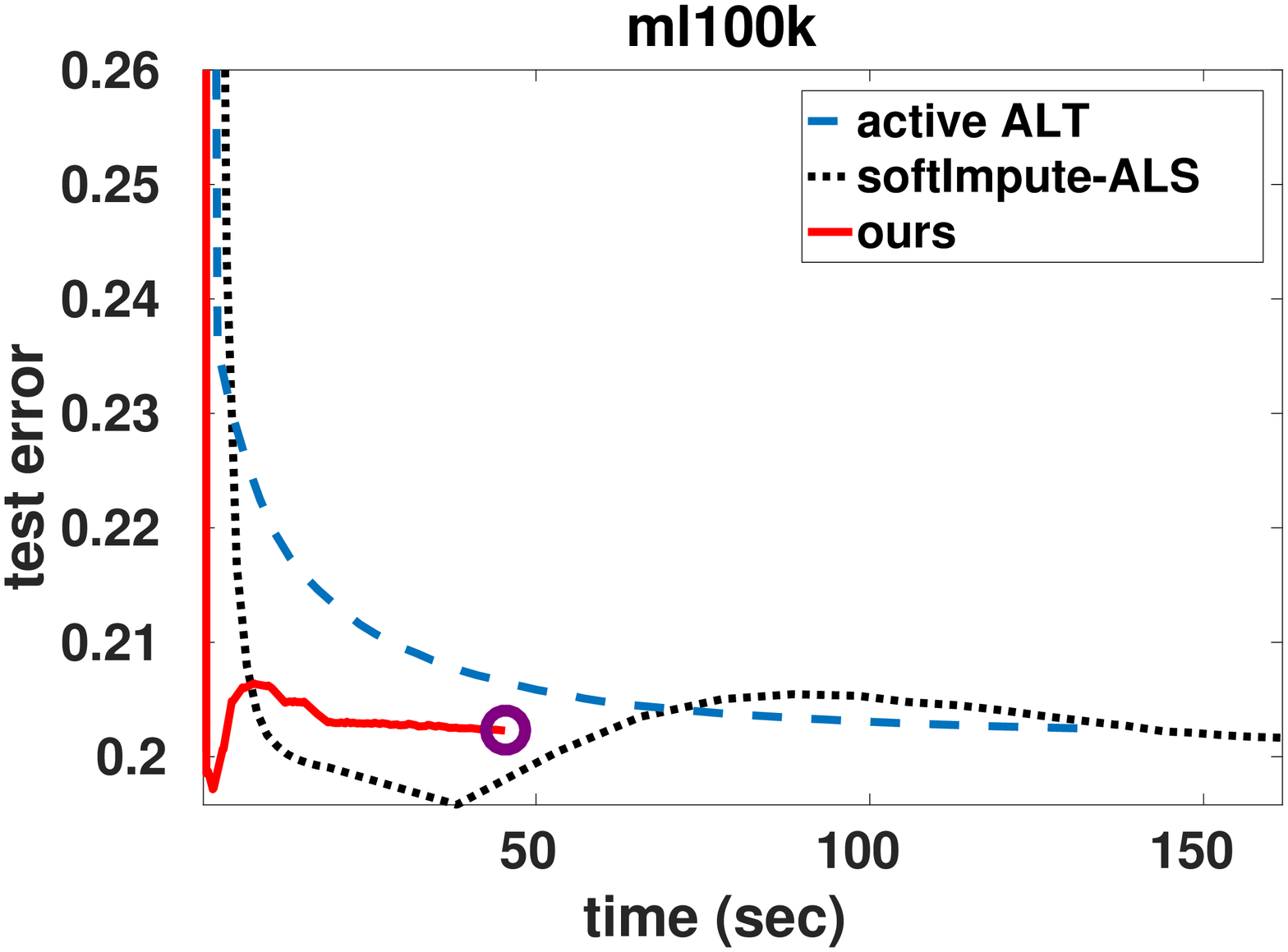}\quad%
    \includegraphics[width=0.3\textwidth]{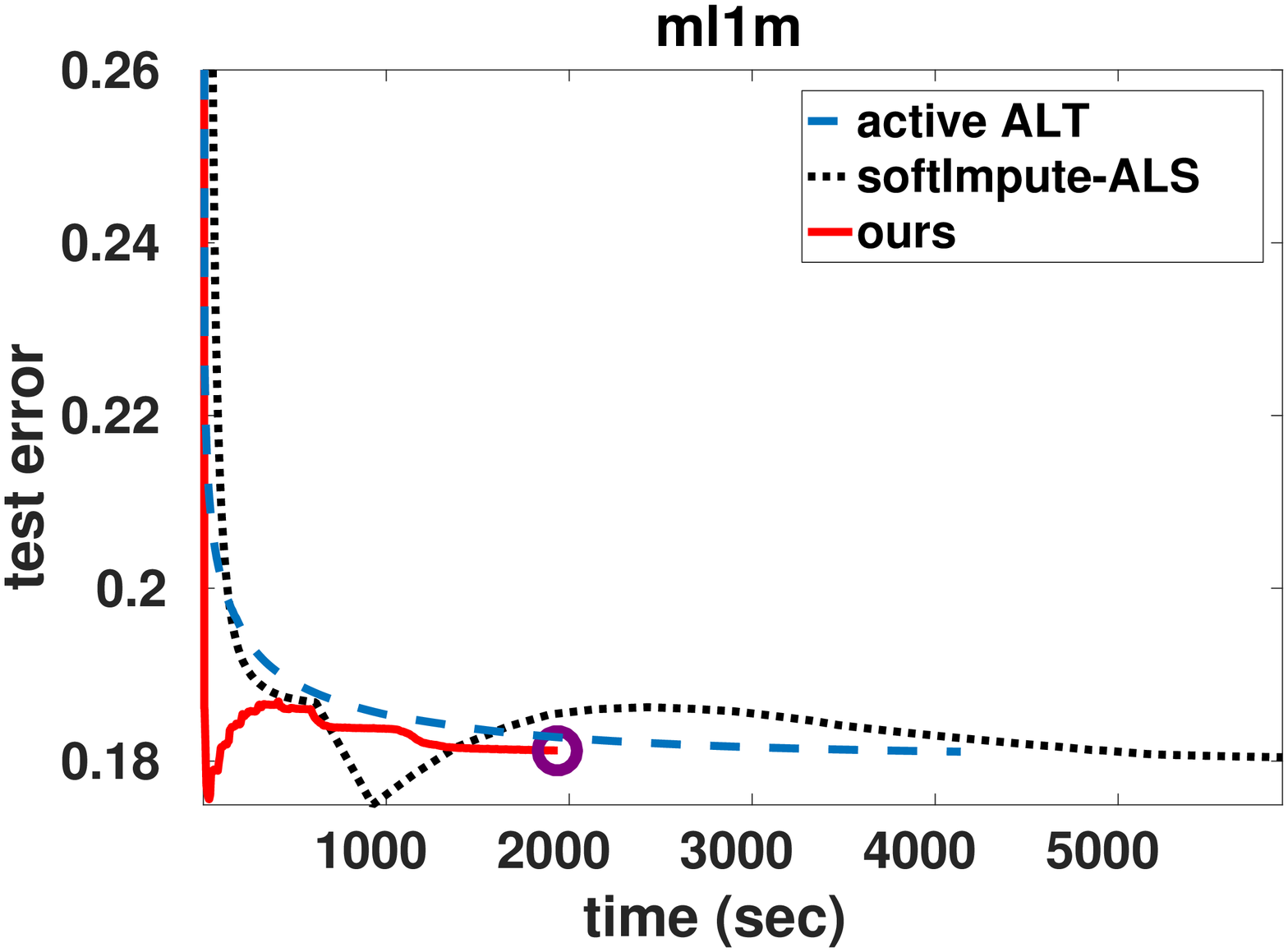}\quad%
    \includegraphics[width=0.3\textwidth]{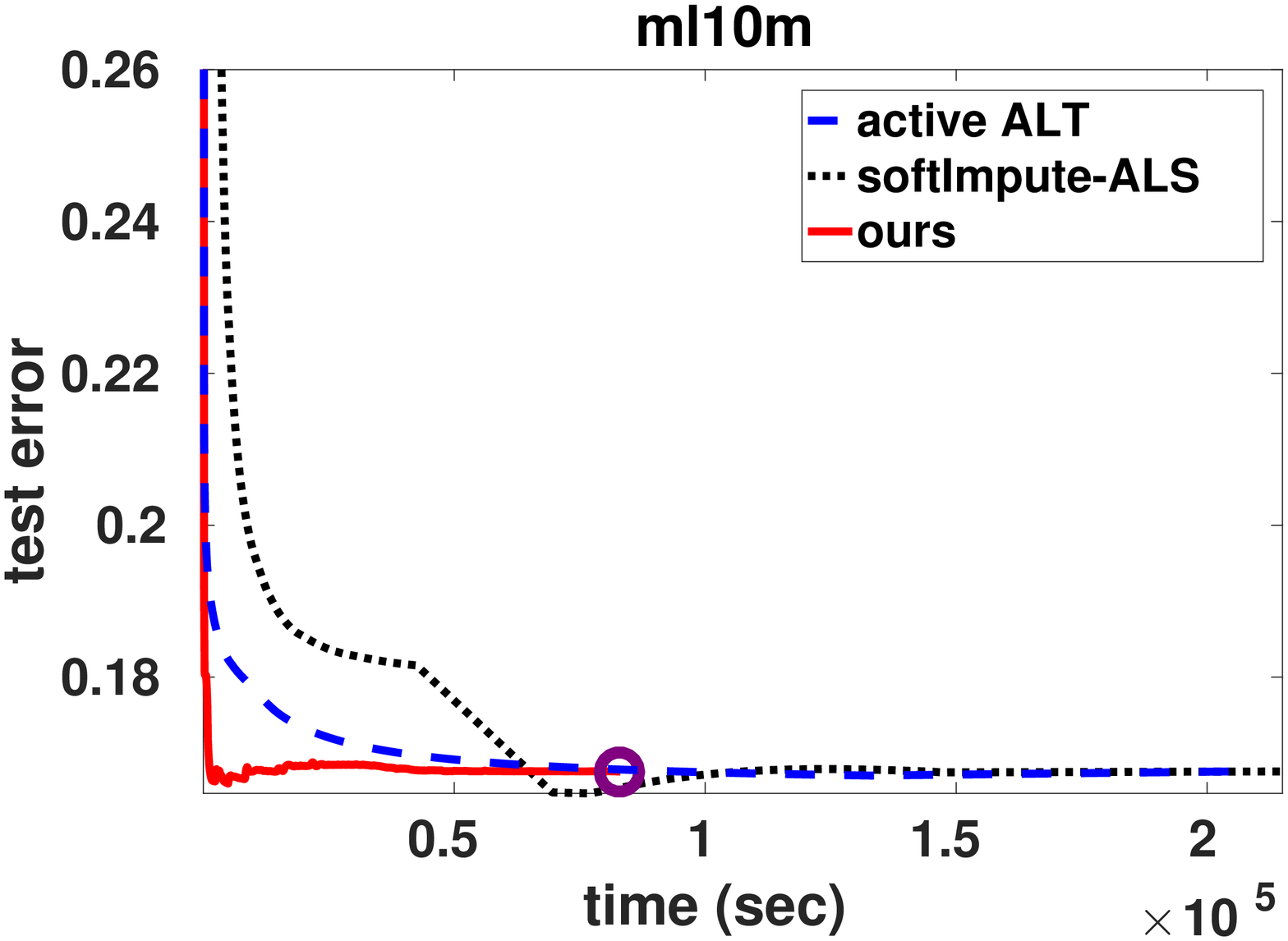}
    \caption{Convergence vs. time of the objective $\fla$ (Top row) and test errors (Bottom row) on three matrix completion large scale datasets for our meta-algorithm, ALT~\cite{hsieh2014} and. The purple circle indicates when the global optimality criterion from \autoref{thm:criterion} is satisfied.}\label{fig:convANDerrors}
\end{figure}

\begin{table}[t]
\small
\caption{Average Normalized Mean Absolute Error (NMAE), convergence time and estimated rank achieved for the best validation parameters by ALT~\cite{hsieh2014}, 
ALS-SI~\cite{hastie2015matrix} 
and \algref{alg:meta-algorithm}, on the three Movielens datasets.}\label{tab:results}
\begin{center}
\begin{tabular}{rccccccccc}
\toprule
        & \multicolumn{3}{c}{$\mathbf{ml100k}$} & \multicolumn{3}{c}{$\mathbf{ml1m}$}   & \multicolumn{3}{c}{$\mathbf{ml10m}$}\\
        & \bf NMAE & \bf time(s) & \bf rank     & \bf NMAE & \bf time(s) & \bf rank     &\bf  NMAE & \bf time(s) & \bf rank \\ 
\midrule
{\bf ALT}     & $0.2165$ & $97$ & $93$     & $0.1806$ & $4133$ & $179$           & $0.1670$ & $205023
$ & $225
$\\
{\bf ALS-SI}  & ${0.1956}$ & $40$ & $16$     & ${0.1749}$ & $832$ & $31$           & ${0.1648}$ & $51205
$ & $36$\\
{\bf Ours}    & ${0.1959}$ & $2$  & $11$     & $0.1751$ & $39$  & $25$           & ${0.1659}$ & $3150$ & $41$\\
\hline
\end{tabular}
\end{center}
\vspace{-.3truecm}
\end{table}


\section{Conclusions}
We studied the convergence properties of low rank factorization methods for trace norm regularization. Key to our study is a necessary and sufficient condition for global optimality, which can be applied to any critical points of the non-convex problem.  This condition together with a detailed analysis of the critical points lead us to propose a meta-algorithm for trace norm regularization, that incrementally expands the number of factors used by the non-convex solver. Although algorithms of this kind have been studied empirically for years, our analysis provides a fresh look and novel insights which can be used to confirm whether a global solution has been reached. Numerical experiments indicated that our optimality condition is useful in practice and the meta-algorithm is competitive with state-of-the art solvers. In the future it would be valuable to study improvements to our analysis, which would allow from one hand to derive precise rate of convergence for specific solvers used within the meta-algorithm and from another hand to study additional conditions under which our global optimality is guaranteed to activate immediately after the number of factors exceed the rank of the trace norm regularization minimizer. 

\section{Acknowledgments}

We kindly thank Andreas Argyriou, Kevin Lai and Saverio Salzo for their helpful comments, corrections and suggestions. This work was supported in part by EPSRC grant EP/P009069/1.

{\small
\bibliographystyle{unsrt}
\bibliography{mp10}
}

\newpage
\appendix

\section*{Appendix}

\renewcommand{\thetheorem}{\Alph{section}.\arabic{theorem}}

Here we collect some auxiliary results and we provide proofs of the results stated in the main body of the paper. 
\section{Auxiliary Results}
\label{sec:app-aux}
The first lemma establishes the variational form for the trace norm; its proof can be found in \cite{Jameson1987}.
\begin{lemma}[Variational Form of the Trace Norm]\label{lemma:trace_norm}
For every $W\in \R^{n \times m}$ and $r \in \N$ let ${\cal F}_r(W)= \{(A,B) \in\R^{n \times k}\times\R^{m \times r}~:~ AB\trans = W\}$. Let $k= \rank(W)$ and let $\sigma_1(W) \geq \cdots \geq \sigma_k(W) >0$ be the $k$ singular values of $W$. Then
\[
\|W\|_* = \sum_{i=1}^k \sigma_i(W)=\frac{1}{2} \inf \left\{~\|A\|_F^2 + \|B\|_F^2~ \Big| ~ (A,B) \in {\cal F}_r(W),~r \in \N \right\}.
\]
Furthermore if $W=U\Sigma V\trans$ is
a singular value decomposition (SVD) for $W$, with $\Sigma=\diag(\sigma_1(W),\dots,\sigma_r(W))$, the infimum is attained for $r = \rank(W)$, $A= U\Sigma^\frac{1}{2}$, and $B=V\Sigma^\frac{1}{2}$.  
\end{lemma}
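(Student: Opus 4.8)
The plan is to prove the claimed identity by sandwiching: I would show that the infimum is simultaneously $\le \|W\|_*$ (by exhibiting an explicit factorization) and $\ge \|W\|_*$ (by a general norm inequality valid for every admissible factorization). Combining the two gives equality, and the explicit factorization will double as the witness showing that the infimum is attained at $r=\rank(W)$, $A = U\Sigma^{1/2}$, $B = V\Sigma^{1/2}$.

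For the upper bound I would take the SVD $W = U\Sigma V\trans$, with $U,V$ having orthonormal columns and $\Sigma = \diag(\sigma_1(W),\dots,\sigma_k(W))$, $k=\rank(W)$, and set $A = U\Sigma^{1/2}$, $B = V\Sigma^{1/2}$. Then $AB\trans = U\Sigma^{1/2}\Sigma^{1/2}V\trans = W$, so $(A,B)\in{\cal F}_k(W)$, and using orthonormality of the columns of $U$ and $V$ I compute $\|A\|_F^2 = \trace(\Sigma^{1/2}U\trans U\Sigma^{1/2}) = \trace(\Sigma) = \sum_{i=1}^k \sigma_i(W)$ and likewise $\|B\|_F^2 = \sum_{i=1}^k\sigma_i(W)$. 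Hence $\tfrac12(\|A\|_F^2+\|B\|_F^2) = \sum_{i=1}^k\sigma_i(W) = \|W\|_*$, which shows the infimum is at most $\|W\|_*$ and is attained by this choice.

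For the lower bound, fix any $(A,B)\in{\cal F}_r(W)$. I would first apply the arithmetic--geometric mean inequality $\tfrac12(\|A\|_F^2+\|B\|_F^2)\ge \|A\|_F\|B\|_F$ (a restatement of $(\|A\|_F-\|B\|_F)^2\ge0$), reducing matters to the scale-invariant inequality $\|AB\trans\|_*\le\|A\|_F\|B\|_F$. To prove the latter I would again use $W=U\Sigma V\trans$ and write $\|W\|_*=\trace(\Sigma)=\trace(U\trans W V)=\trace(U\trans A\,B\trans V)=\langle A\trans U,\,B\trans V\rangle_F$, then invoke Cauchy--Schwarz to get $\le \|A\trans U\|_F\,\|B\trans V\|_F$; since $U,V$ have orthonormal columns, $\|A\trans U\|_F\le\|A\|_F$ and $\|B\trans V\|_F\le\|B\|_F$, yielding $\|W\|_*\le\|A\|_F\|B\|_F$. (Equivalently, one may use the duality $\|W\|_*=\sup_{\|X\|\le1}\langle W,X\rangle$ and bound $\langle AB\trans,X\rangle=\langle A,XB\rangle\le\|A\|_F\|XB\|_F\le\|A\|_F\|B\|_F$.) Taking the infimum over all factorizations then gives $\tfrac12\inf\{\,\cdots\,\}\ge\|W\|_*$.

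The main obstacle is the key inequality $\|AB\trans\|_*\le\|A\|_F\|B\|_F$; everything else is routine. Both routes resolve it cleanly, but each quietly rests on a structural fact — the optimal SVD pairing $U,V$ realizing the trace norm, or the duality between trace norm and operator norm — so the care goes into justifying that pairing and the elementary observation that $\|A\trans U\|_F\le\|A\|_F$ when $U$ has orthonormal columns (since $\trace(U\trans A A\trans U)=\trace(AA\trans\,UU\trans)\le\trace(AA\trans)$ as $UU\trans\preceq I$ and $AA\trans\succeq0$). Attainment at $r=\rank(W)$ is then immediate from the witness constructed for the upper bound.
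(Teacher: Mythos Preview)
Your proof is correct and shares the paper's overall sandwich strategy: exhibit the SVD-based factorization $A=U\Sigma^{1/2}$, $B=V\Sigma^{1/2}$ for the upper bound and attainment, then combine Cauchy--Schwarz with AM--GM for the lower bound. The difference lies only in how the key inequality $\|AB^\top\|_*\le\|A\|_F\|B\|_F$ is obtained. The paper first reduces to the diagonal case via orthogonal invariance and proves the intermediate columnwise form $\|W\|_*=\inf\sum_i\|a_i\|\|b_i\|$ by applying Cauchy--Schwarz to each term $a_i^\top b_i$ of $\trace(AB^\top)$; a second Cauchy--Schwarz then gives $\sum_i\|a_i\|\|b_i\|\le\|A\|_F\|B\|_F$, and AM--GM finishes. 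You instead write $\|W\|_*=\trace(U^\top AB^\top V)=\langle A^\top U,\,B^\top V\rangle_F$ directly, apply Cauchy--Schwarz once on the Frobenius inner product, and use $\|A^\top U\|_F\le\|A\|_F$ (from $UU^\top\preceq I$). Your route is a bit more direct and sidesteps the diagonal reduction; the paper's route has the side benefit of also establishing the columnwise variational characterization along the way.
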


Recall that if $\phi:\R^d \rightarrow \R \cup \{+\infty\}$ is proper convex function, its sub-differential at $x$ is the set 
\[
\partial \phi(x) =\left\{u : \phi(x) + \lb u,y-x\rb \leq \phi(y),~\textrm{for~all}~y \in \textrm{domain}(\phi)\right\}.
\]
The elements of $\partial \phi(x)$ are called the sub-gradients of $\phi$ at $x$.

Let $\Ob_n$ be the set of $n \times n$ orthogonal matrices. A norm $\|\cdot\|: \R^{m \times n} \rightarrow [0,\infty)$ is called {\em orthogonally invariant}
if, for every $U \in \Ob_n$, $V \in \Ob_m$ and $W \in \R^{n \times m}$ we have that $\|UWV\|=\|W\|$ or, equivalently $\|W\|= g(\sigma(W))$, where $g$ is a symmetric gauge function (SGF), that is $g$ is a norm invariant to permutations and sign changes. An important example of orthogonally invariant norms are the $p$-Schatten norms, $\|W\|= \|\sigma(W)\|_p$, where $\|\cdot\|$ is the $\ell_p$-norm of a vector. In particular, for $p\in \{1,2,\infty\}$ we have the trace, Frobenius, and spectral norms, respectively.

The following result is due to \cite[Cor.~2.5]{Lewis1995}.
\begin{lemma}
If $\|\cdot\|: \R^{m \times n}$ is an orthogonally invariant and $g$ is the associated SGF, then for every $W \in \R^{n \times m}$, it holds that
\[
\partial \|W\| = \{U\diag(\mu) V\trans~:~U \in \Ob_n,~V\in \Ob_m,~ \mu \in \partial g(\sigma),~W=U\diag(\sigma) V\trans\}.
\]
\label{lem:lewis}
\end{lemma}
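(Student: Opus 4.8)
The plan is to combine the Fenchel--Young characterization of the subdifferential of a norm with von Neumann's trace inequality. Recall that for any norm $\|\cdot\|$ with dual norm $\|\cdot\|^*$ one has $Z \in \partial\|W\|$ if and only if $\|Z\|^* \le 1$ and $\lb Z, W\rb = \|W\|$, where $\lb Z, W\rb = \trace(Z\trans W)$. The first step is to identify the dual norm: if $\|\cdot\|$ is orthogonally invariant with associated SGF $g$, then $\|\cdot\|^*$ is again orthogonally invariant, with SGF equal to the polar gauge $g^*(y) = \sup\{\lb x, y\rb : g(x)\le 1\}$. This itself follows from the inequality below together with the absolute symmetry of $g$. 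Consequently the subdifferential condition rewrites purely in terms of singular values: $Z \in \partial\|W\|$ iff $g^*(\sigma(Z)) \le 1$ and $\lb Z, W\rb = g(\sigma(W))$.

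The key tool is von Neumann's trace inequality, $\lb Z, W\rb \le \lb \sigma(Z), \sigma(W)\rb$ (the Euclidean inner product of the singular values listed in nonincreasing order), with equality if and only if $Z$ and $W$ admit a simultaneous SVD, i.e. there exist $U\in\Ob_n$ and $V\in\Ob_m$ with $W = U\diag(\sigma(W))V\trans$ and $Z = U\diag(\sigma(Z))V\trans$. I will use this alongside the scalar fact that, for the symmetric gauge $g$, one has $\mu \in \partial g(\sigma)$ if and only if $g^*(\mu)\le 1$ and $\lb\mu, \sigma\rb = g(\sigma)$.

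For the inclusion $\supseteq$, take $Z = U\diag(\mu)V\trans$ with $W = U\diag(\sigma)V\trans$ and $\mu \in \partial g(\sigma)$. Since $g^*$ is absolutely symmetric and $\sigma(Z)$ is the nonincreasing rearrangement of $|\mu|$, we get $g^*(\sigma(Z)) = g^*(\mu) \le 1$, while $\lb Z, W\rb = \lb\mu, \sigma\rb = g(\sigma) = \|W\|$; hence $Z\in\partial\|W\|$. For the reverse inclusion $\subseteq$, take $Z\in\partial\|W\|$, so $g^*(\sigma(Z))\le 1$ and $\lb Z, W\rb = g(\sigma(W))$. Chaining von Neumann's inequality with the generalized Cauchy--Schwarz inequality $\lb\sigma(Z),\sigma(W)\rb \le g^*(\sigma(Z))\, g(\sigma(W))$ gives
\[
g(\sigma(W)) = \lb Z, W\rb \le \lb\sigma(Z),\sigma(W)\rb \le g^*(\sigma(Z))\, g(\sigma(W)) \le g(\sigma(W)),
\]
so every inequality is an equality. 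Equality in von Neumann's inequality yields a simultaneous SVD $W = U\diag(\sigma(W))V\trans$, $Z = U\diag(\sigma(Z))V\trans$, whereas equality in the Cauchy--Schwarz step together with $g^*(\sigma(Z))\le 1$ gives $\sigma(Z)\in\partial g(\sigma(W))$ by the scalar characterization. Setting $\mu = \sigma(Z)$ then exhibits $Z$ in the claimed form.

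The main obstacle is establishing von Neumann's inequality and, crucially, its equality case, since the simultaneous-SVD conclusion is what forces the common orthogonal factors $U,V$ in the representation; once this is in hand the rest is bookkeeping. A secondary point requiring care is the identification of the dual norm and the repeated use of the permutation and sign invariance of $g$ and $g^*$, which is what lets me pass freely between the possibly unsorted and signed vector $\mu$ and the sorted nonnegative singular value vector $\sigma(Z)$. The degenerate case $W = 0$ is handled directly, since both sides then reduce to the dual unit ball $\{Z : g^*(\sigma(Z))\le 1\}$.
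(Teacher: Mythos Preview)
Your argument is correct. The paper does not supply a proof of this lemma at all, instead citing it directly as \cite[Cor.~2.5]{Lewis1995}; your route via the Fenchel--Young characterization of the subdifferential of a norm together with von Neumann's trace inequality (and its equality case) is precisely the argument underlying Lewis's result, so there is nothing further to compare.
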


\section{Proofs}
\label{sec:app-derivations}
For convenience of the reader, we restate the results presented in the main body of the paper.

\PEquivalence*
\begin{proof}

Let $W_*\in\R^{n \times m}$ be a minimizer for $\fla$ of rank $r_* = \rank(W_*)$ and let $U \Sigma V^\top$ be a singular value decomposition of $W_*$ with $U\in\R^{n \times r_*}$ and $V\in\R^{m \times r_*}$ with orthonormal columns and $\Sigma\in\R^{r_* \times r_*}$ diagonal with positive diagonal entires. Define $A_* = U \Sigma^{1/2} \in\R^{n \times r_*}$ and $B = V \Sigma^{1/2}\in\R^{m \times r_*}$. By construction $\|W_*\|_* = \frac{1}{2}(\|A_*\|_F^2 + \|B_*\|_F^2)$ and therefore 
\[
  \fla(W_*) = \fla(A_*B_*^\top) = \glarstar(A_*,B_*).
\]
Now, we prove that $(A_*,B*)$ is a minimizer for $\glarstar$. Suppose by contraddiction that there exist a couple $A_1\in\R^{n \times r_*},B_1\in\R^{m \times r_*}$ such that $\glarstar(A_1,B_1) < \glarstar(A_*,B_*)$. Define
\[
(\bar A_1, \bar B_1) = \argmin{AB^\top = A_1B_1^\top}{ \|A\|_F^2 + \|B\|_F^2}.
\]
Then by \autoref{lemma:trace_norm} we have 
\[
  \|\bar A_1 \bar B_1^\top\|_* = \frac{1}{2}(\|\bar A_1\|_F^2 + \|\bar B_1\|_F^2) \leq \frac{1}{2}(\|A_1\|_F^2 + \|B_1\|_F^2)
\]
and therefore 
\[
  \fla(\bar A_1\bar B_1^\top) = \glarstar(\bar A_1,\bar B_1) \leq \glarstar(A_1,B_1) < \glarstar(A_*,B_*) = \fla(W_*)
\]
which is clearly not possible since $W_*$ was a global minimizer for $W_*$.
\end{proof}

\PCharacterization*
\begin{proof}
Let $\hA \in \mathbb{R}^{m \times r}$, $\hB \in \mathbb{R}^{n \times r}$ be the matrices that correspond to a critical point of $\glar$. We let 
\beq
\nabla  \ell(\hA \hB\trans) = \lambda U_1 V_1\trans +  U_2 \Sigma_2 V_2\trans + 
U_3 \Sigma_3 V_3\trans \label{SVDbreakdown}
\eeq
be the breakdown SVD of the gradient of $\ell$ at $\hA \hB\trans$, where $\Sigma_2$ is the diagonal matrix formed by the singular values strictly larger than $\lambda$ and $\Sigma_3$ is the diagonal matrix formed by the singular values strictly smaller than $\lambda$ (including those which are zero). For each $i=1,2,3$ we denote with $s_i$ the number of columns of $U_i\in\R^{n \times s_i}$ and $V_i\in\R^{m \times s_i}$. Recall that the matrices $[U_1~U_2~U_3]\in\R^{n \times n}$ and $[V_1~V_2~V_3]\in\R^{m \times m}$ are both orthogonal.

Taking the derivatives of \eqref{eq:factorized-problem} w.r.t. $A$ and $B$ and setting them to zero gives the following optimality conditions for the critical points
\begin{align}
  \nabla  \ell(\hA \hB\trans) \hB + \lambda \hA &= 0 \label{wrtA}\\ 
  \nabla  \ell(\hA \hB\trans)\trans \hA+ \lambda \hB &= 0 \label{wtrB}.
\end{align}
Solving \eqref{wtrB} for $\hB$ and 
and replacing it in \eqref{wrtA} yields that
\beq
\left( - \frac{1}{\lambda^2} \nabla  \ell(\hA \hB\trans) \nabla  \ell(\hA \hB\trans)\trans + I_m\right) \hA = 0.
\label{eq:mmm}
\eeq
By \eqref{SVDbreakdown} $  \ell(\hA \hB\trans) \nabla  \ell(\hA \hB\trans)\trans = \lambda^2 U_1\trans + U_2 \Sigma_2^2 U_2\trans + U_3 \Sigma_3^2 U_3\trans$. Using this in \eqref{eq:mmm} and rearranging gives
\[
\left( I_m -U_1 U_1\trans - U_2 \frac{\Sigma_2^2}{\lambda^2} U_2\trans - U_3 \frac{\Sigma_3^2}{\lambda^2} U_3\trans\right) \hA= 0
\]
which we rewrite as
\[
\left( U_2 (I-\frac{\Sigma_2^2}{\lambda^2})U_2\trans + U_3 (I-\frac{\Sigma_3^2}{\lambda^2}) U_3\trans\right) \hA= 0.
\]
Therefore, the columns of $\hA$ must be in the range of $U_1$ (i.e. orthogonal to $U_2$ and $U_3)$, namely
\beq
\hA= U_1 C
\label{eq:spanA}
\eeq
for some $C \in \R^{s_1 \times r}$.  Similarly we derive that
\beq
\hB= V_1 D
\label{eq:spanB}
\eeq
for some $D \in \R^{s_1 \times r}$. Combining \eqref{SVDbreakdown} with \eqref{eq:spanB} we obtain that 
\[
\nabla  \ell(\hA \hB\trans) \hB = -\lambda U_1 C.
\]
Using this equation, \eqref{eq:spanA} and \eqref{eq:spanB} we rewrite \eqref{wrtA} as $\lambda U_1 D + \lambda U_1 C = 0$. This implies that $D=-C$ and, so, $\hB= - V_1 C$.
\end{proof}

\TCriterion*
\begin{proof}
Let $\hW = \hA \hB\trans$. We need to show that $0 \in \partial f_\lambda(\hW)$
or, equivalently
\beq
- \frac{1}{\lambda} \nabla  \ell(\hW) \in \partial \|\hW\|_*.
\label{opt:con}
\eeq
Let $Z = - \frac{1}{\lambda} \nabla  \ell(\hW)$. As a special case of Lemma \ref{lem:lewis} we have that $Z \in \partial \|\hW\|_*$ holds true if and only if there exists a simultaneous singular value decomposition of the form $\hW = U \diag(\sigma) V\trans$, $Z = U \diag(\sigma(Z)) V\trans$ and $\sigma(Z) \in \partial \|\sigma(\hW)\|_1$, with $\sigma(Z)$ denoting the spectrum of $Z$ (namely the vector of singular values of $Z$ arranged in non-increasing order). Recall that 
$$
\partial \|\sigma\|_1 = \{z \in \R^m : z_i = 1~\text{if}~\sigma_i \neq 0,~\text{and}~z_i \in [-1,1]~\text{otherwise}\}.
$$

Using the same notation of \autoref{thm:characterization}, consider the SVD 
\[
Z = - U_1 V_1\trans - U_2 \frac{\Sigma_2}{\lambda} V_2\trans - U_3 \frac{\Sigma_3}{\lambda} V_3\trans.
\]
By \autoref{thm:characterization} we have that $A_* = U_1 C$ and $B_* = - V_1 C$, with $C \in\R^{s_1\times r}$. Now, let $\bar r = \textrm{rank}(C)\leq r$ and let $C = P \Gamma Q^\top$ be the SVD of $C$, with $P\in\R^{s_1 \times \bar r}$ and $Q\in\R^{\bar r \times r}$ matrices with orthonormal columns and $\Gamma\in\R^{\bar r \times \bar r}$ diagonal with positive diagonal elements. Denote $\widetilde P = [P ~P^{\SSS \perp}]\in\R^{s_1 \times s_1}$ the orthonormal matrix obtained by completing $P$ with a matrix $P^{\SSS \perp}\in\R^{s_1 \times (s_1 - \bar r)}$ with orthonormal columns such that $P\trans P^{\SSS \perp} = 0$. Moreover denote $\widetilde \Gamma \in\R^{s_1 \times s_1}$ as 
\[
  \widetilde \Gamma = \left[\begin{array}{cc}\Gamma & 0_{\bar r \times s_1 - \bar r} \\ 0_{s_1 - \bar r \times \bar r} & 0_{s_1 - \bar r \times \bar s_1 - \bar r}\end{array}\right].
\]
Then,  
\[
\hW = - U_1 CC\trans V_1 = (-U_1P)\Gamma^2 (V_1P)\trans = (-U_1\widetilde P) \widetilde\Gamma^2(V_1 \widetilde P)\trans = (-\widetilde U_1)\widetilde\Gamma^2(\widetilde V_1)
\]
with $\widetilde U_1 = U_1\widetilde P$ and $\widetilde V = V_1\widetilde P$. Note that since $\widetilde P$ is orthonormal, $\widetilde U_1 \widetilde V_1\trans = U_1V_1\trans$. Moreover $U_2,U_3$ have columns orthogonal to $\widetilde U_1$ and $V_2,V_3$ have columns orthogonal to $\widetilde V_1$. Consequently,
\[
Z = - U_1 V_1\trans - U_2 \frac{\Sigma_2}{\lambda} V_2\trans - U_3 \frac{\Sigma_3}{\lambda} V_3\trans = - {\tilde U}_1 {\tilde V}_1\trans - U_2 \frac{\Sigma_2}{\lambda} V_2  - U_3 \frac{\Sigma_3}{\lambda} V_3\trans
\]
is an alternative singular value decomposition for $Z$. Therefore, $\hW$ and $Z$ have a simultaneous singular value decomposition and we can conclude that $\sigma(Z) \in \partial \|\sigma(\hW)\|_1$ if and only if $s_2 = 0$, namely $\|\grad\ell(W_*)\|\leq\lambda$ as desired.
\end{proof}

\PEscape*
\begin{proof}
By Theorem \ref{thm:characterization}, $\hA=U_1 C$ and $\hB= - V_1 C$, where $U_1$ and $V_1$ are the matrices of left and right singular vectors of $\nabla \ell (\hA \hB\trans)$ and $C \in \R^{s \times r}$, for $s=\rank(\hA) = \rank(\hB)$.~Taking the SVD of $C = P \Gamma Q\trans$, we rewrite
\begin{align}
\hA = U_1 P \Gamma Q\trans, \quad \hB = - V_1 P \Gamma Q\trans. \label{trans}
\end{align}
Since $\hA$ and $\hB$ are rank deficient and they have the same null space, we can choose $q \in \R^{r}$ such that $\hA q = \hB q = 0$. Let $u_2$ and $v_2$ the a left and right singular vector of $\nabla  \ell (\hA \hB\trans)$ with singular value equal to $\mu$. 

We consider a perturbation of the objective function in the direction $(-u_2q\trans,v_1q\trans)$. We have that
\begin{align}
L(\gamma) &=  \ell\big((\hA - \gamma u_2q\trans) (\hB + \gamma v_1q\trans)\trans\big) + \frac{\lambda}{2}\big (\|\hA - \gamma u_1q\trans\|_F^2 + \|\hB + \gamma v_1q\trans\|_F^2\big) \\
~ &=  \ell(\hA \hB\trans - \gamma^2 u_2 v_2\trans) + \frac{\lambda}{2} (\|\hA\|_F^2 + \|\hB\|_F^2) + \lambda \gamma^2. \\
\end{align}
Thus, we have that
\begin{align}
L'(\gamma) &= \langle -2 \gamma \nabla  \ell(\hA \hB\trans - \gamma^2 u_2 v_2\trans), u_2 v_2\trans \rangle + 2 \gamma \lambda \\
& = - 2 \gamma u_2\trans \nabla  \ell(\hA \hB\trans - \gamma^2 u_2 v_2\trans) v_2+ 2 \gamma \lambda.
\end{align}
Consequently
\[
L''(0) = 2 (\lambda - u_2\trans\nabla  \ell(\hA \hB\trans) v_2) = 2 (\lambda-\mu) < 0
\]
and the result follows.
\end{proof}

\TConvergence*
\begin{proof}
We have shown in \autoref{thm:escape} that every critical point $(A,B)$ of $\glar$ for $r\geq\min(n,m)$ is either a global minimizer or a strict saddle point, namely such that the Hessian $\grad^2\glar(A,B)$ has at least a negative eigenvalue. Therefore, since the error function $\ell$ is twice differentiable with gradient Lipschitz continuous also $\glar$ is. Let $L_r>0$ be the Lipschitz constant of $\grad\glar$. Then, we are in the hypotheses of Thm.$4.1$ in \cite{lee2016} which states that if $(A_k,B_k)_{k\in\N}$ is obtained with step $0<\alpha<1/L_r$ with initial point $(A_0,B_0)$ sampled uniformly at random, then for any $(A_*,B_*)$ {\em strict saddle point} of $\glar$,
\[
  \textrm{Prob}\left(\lim_{k\to+\infty} (A_k,B_k) = (A_*,B_*)~\right) = 0
\]
which implies the desired result and corresponds to \autoref{cor:6}. 
\end{proof}
At last we show that if the spectral norm of the gradient at a critical point is close to one from above then value of the objective function is close to the global minimum.
\begin{proposition}
Let $(A_*,B_*)$ be a critical point of $\glar$ and let $W_* = A_* B_*\trans$. If $\|\grad \ell(W_*)\| \leq \lambda+\epsilon$  with $\epsilon \in [0,\lambda]$ then
\[
f_\lambda(W_*)  \leq \min_W f_\lambda(W) + \epsilon  \left(\|W_*\|_{*} + \frac{\ell(0)}{
\lambda}\right)
\]
\end{proposition}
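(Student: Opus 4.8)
The plan is to show that $W_* = A_*B_*\trans$ is an \emph{approximate} stationary point of the convex objective $\fla$, in the precise sense that $\partial\fla(W_*)$ contains an element of small spectral norm, and then to turn this into the claimed suboptimality bound by a single application of convexity. Throughout, let $\bar W$ denote a global minimizer of $\fla$, so that $\min_W\fla(W)=\fla(\bar W)$.

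First I would record the subdifferential at $W_*$. Since $\fla=\ell+\lambda\|\cdot\|_*$ with $\ell$ differentiable, $\partial\fla(W_*)=\grad\ell(W_*)+\lambda\,\partial\|W_*\|_*$, and by \autoref{lem:lewis} applied to the trace norm every element of $\partial\|W_*\|_*$ has the form $U_W V_W\trans+N$, where $W_*=U_W\Sigma_W V_W\trans$ is a compact SVD and $N$ satisfies $U_W\trans N=0$, $N V_W=0$, $\|N\|\le 1$. The heart of the argument is to exhibit one admissible $N$ for which $\xi:=\grad\ell(W_*)+\lambda\big(U_W V_W\trans+N\big)$ has spectral norm at most $\epsilon$.

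This is where the characterization does the work. Using \autoref{thm:characterization} and the computation in the proof of \autoref{thm:criterion}, write $\grad\ell(W_*)=\lambda U_1 V_1\trans+U_2\Sigma_2 V_2\trans+U_3\Sigma_3 V_3\trans$, the three blocks collecting the singular values equal to, strictly above, and strictly below $\lambda$; the hypothesis $\|\grad\ell(W_*)\|\le\lambda+\epsilon$ forces the diagonal entries of $\Sigma_2$ into $(\lambda,\lambda+\epsilon]$. Moreover the singular vectors of $W_*$ span only a subspace of the columns of $U_1,V_1$, with $U_W V_W\trans=-U_1 PP\trans V_1\trans$ in the notation of that proof. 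I would then take $N=-U_1(I-PP\trans)V_1\trans-\frac{1}{\lambda}U_3\Sigma_3 V_3\trans-U_2 V_2\trans$. Its three summands are supported on mutually orthogonal left/right singular directions, all orthogonal to the column and row spaces of $W_*$, so $N$ satisfies $U_W\trans N=0$ and $N V_W=0$; since each block has spectral norm at most one (here $\|\Sigma_3\|<\lambda$ is used), $\|N\|\le 1$. Substituting, the $U_1$ and $U_3$ blocks cancel exactly, leaving $\xi=U_2(\Sigma_2-\lambda I)V_2\trans$, whence $\|\xi\|\le\epsilon$ because the entries of $\Sigma_2$ do not exceed $\lambda+\epsilon$. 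Carrying out this cancellation and verifying that $N$ is admissible is the step I expect to require the most care.

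With $\xi\in\partial\fla(W_*)$ and $\|\xi\|\le\epsilon$ in hand, the rest is routine. The subgradient inequality at $W_*$ gives $\fla(\bar W)\ge\fla(W_*)+\langle\xi,\bar W-W_*\rangle$, so by the duality between the spectral and trace norms together with the triangle inequality, $\fla(W_*)-\fla(\bar W)\le\langle\xi,W_*-\bar W\rangle\le\|\xi\|\,\|W_*-\bar W\|_*\le\epsilon\big(\|W_*\|_*+\|\bar W\|_*\big)$. It then remains to bound $\|\bar W\|_*$: by optimality and nonnegativity of the loss, $\lambda\|\bar W\|_*\le\fla(\bar W)\le\fla(0)=\ell(0)$, so $\|\bar W\|_*\le\ell(0)/\lambda$. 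Combining the two displays yields $\fla(W_*)\le\min_W\fla(W)+\epsilon\big(\|W_*\|_*+\ell(0)/\lambda\big)$, as claimed. I would flag explicitly that the final step invokes $\ell\ge 0$, which holds for the square and logistic losses considered here; one could also note that the sharper bound $\epsilon\|\bar W\|_*$ is available because $\langle\xi,W_*\rangle=0$ by orthogonality of the blocks, with the stated (looser) form following from the triangle inequality.
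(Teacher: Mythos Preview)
Your argument is correct and follows the same strategy as the paper: decompose $\grad\ell(W_*)$ into blocks according to whether singular values equal, exceed, or fall below $\lambda$, exhibit a subgradient $\xi\in\partial\fla(W_*)$ supported on the ``$>\lambda$'' block with $\|\xi\|\le\epsilon$, apply the subgradient inequality, and bound $\|\bar W\|_*\le\ell(0)/\lambda$. Your construction of the admissible $N$ and the verification of the orthogonality conditions are in fact more explicit than the paper's version, and your observation that $\langle\xi,W_*\rangle=0$ (yielding the sharper bound $\epsilon\|\bar W\|_*$) is a nice addition not noted in the paper.
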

\begin{proof}
We write  
\begin{eqnarray}
\nabla \ell(W_*) & =  & \lambda U_1 V_1\trans+  U_2 \Sigma_2 V_2\trans + 
\lambda U_3 V_3\trans + U_3 \Sigma_3 V_3\trans \\ \nonumber
& = & \lambda (U_1 V_1\trans +U_3 V_3\trans) +  U_2 \Sigma_2 V_2\trans + 
\lambda U_3 V_3\trans + U_3 (\Sigma_3 - \lambda U) V_3\trans. 
\end{eqnarray}
Let $Z =U_3 (\Sigma_3 - \lambda U) V_3\trans$. By assumption $\|\Sigma_3\|\leq 2\lambda$, implying that $-Z \in \partial f_\lambda(W_*)$. That is, for every $W \in \Mnm$,
\beq
\label{eq:ww}
f_\lambda(W_*) + \lb Z, W-W_*\rb \leq f_\lambda(W).
\eeq
In turn this implies that 
\[
f_\lambda(W_*) \leq \lb Z, W-W_* \rb + f_\lambda(W).
\]
Since the minimizer can be constrainted to be in the set $\{W: \|W\|_{\rm tr} \leq \ell(0)/
\lambda\}$ we conclude from \eqref{eq:ww} that
\begin{eqnarray}
\nonumber
f_\lambda(W_*) & \leq & f_\lambda(W) -\lb Z, W-W_* \rb \\ [2pt]
\nonumber
& \leq &\min_W f_\lambda(W) + \|Z\| \left(\|W_*\|_* + \frac{\ell(0)}{
\lambda}\right)\\
\nonumber
& \leq & 
\min_W f_\lambda(W) + \epsilon  \left(\|W_*\|_{\rm tr} + \frac{\ell(0)}{
\lambda}\right).
\end{eqnarray}
\end{proof}

\section{Meta-algorithm with Explicit Escape from Stationary Points}

We expand on the discussion in \secref{sec:meta-algorithm}, where we formally introduced the meta-algorithm considered in this work. Specifically we observe that in the formulation of Algorithm \ref{alg:meta-algorithm} we did not exploit the result in \autoref{thm:escape}, providing an explicit decreasing direction for $\glarplus$ from the inflated point $([A_*~0],[B_* ~0])$, where $(A_*,B_*)$ is a stationary point for $\glar$. For completeness in Algorithm \ref{alg:meta-algorithm-descend} we report a variant of the meta-algorithm proposed in this paper that makes use of such escape direction. We care to point out that in our experiments we did not observe any statistically significant difference with the original version. 

We discuss here the details of Algorithm \ref{alg:meta-algorithm-descend}. By \autoref{thm:escape} we know that there exists $\gamma>0$ such that 
\[
  \glarplus([A_*~0]+\gamma[0_{n \times r},~u],[B_*~0] +\gamma[0_{m \times r}, ~-v]) < \glar(A_*,B_*)
\] 
where $u\in\R^n, v\in\R^m$ are respectively the left and right singular vectors associated to the largest singular value $\mu$ of $\grad \ell(A_* B_*\trans)$. In Algorithm \ref{alg:meta-algorithm-descend} these are provided by the routine {\sc LargestSingularPair}.

We can find a new point on the direction specified above by trying to approximate the minimizer of 
\[
  \gamma_* = \argmin{\gamma \in [0,1]} ~~ \glarplus([A_*~0]+\gamma[0_{n \times r} ~u],[B_*~0] +\gamma[0_{m \times r} ~-v])
\]
for instance by considering a set of candidate $\gamma_1,\dots,\gamma_M\in(0,1]$ (e.g. $\gamma_i = \gamma_0^i$ for $i=1,\dots,M$) and choose the one leading to the lowest value for $\glarplus$. In Algorithm \ref{alg:meta-algorithm-descend} we refer to this procedure as {\sc LineSearch} given its analogy with standard line search approaches often used in optimization. However notice that such methods can typically leverage on a criterion depending on the norm of the gradient $\|\grad\glarplus([A_*~0],[B_*~0])\|_F$ in order to guarantee that the function decreases significantly. We cannot replicate such strategy since $\grad\glarplus([A_*~0],[B_*~0]) = 0$ in our case.

As a final observation, we care to point out (as we already done in \secref{sec:meta-algorithm}) that it is necessary to perturb the point $([A_*~\gamma u], [B_*~-\gamma v])$, e.g. by adding some noise, in order to guarantee that $([A_*~\gamma u], [B_*~-\gamma v])$ does not belong to the set of measure zero, mentioned in \autoref{thm:global-convergence}, of initial points that converge to strict saddle points of $\glar$.

\begin{algorithm}[t]
   \caption{\textsc{Meta-Algorithm with Explicit Escape From Stationary Points}}
   \label{alg:meta-algorithm-descend}
\begin{algorithmic}

   \State ~

   \State {\bfseries Input:} $\lambda>0$, $\epsilon_{\textrm{conv}}>0$ convergence tolerance, $\epsilon_{\textrm{crit}}>0$ global criterion tolerance, $\sigma>0$ noise parameter.

   \State {\bfseries Initialize:} Set $r=1$. Sample $A_0'\in\R^n$ and $B_0'\in\R^n$ randomly.


   \State {\bfseries For} $r = 1$ to $\min(n,m)$

   \State \quad $(A_r,B_r)$ = {\sc OptimizationAlgorithm($A_{r-1}',B_{r-1}',\glar,\epsilon_{\rm{conv}}$) }

   \State \quad {\bfseries If} $\|\grad \ell(A_r B_r\trans)\|\leq\lambda + \epsilon_{\textrm{crit}}$
   \State \quad \quad {\bfseries Break} 
   \State \quad $[u,\mu,v] = ${\sc LargestSingularPair}$(\grad \ell(A_r B_r\trans))$
   \State \quad $\hat \gamma = $ {\sc LineSearch($[A_r~ 0], [B_r~ 0], u, v$)}
   \State \quad Perturb $u = u + \eta$ with $\eta\sim\mathcal{N}(0,\sigma I_{n\times n})$
   \State \quad Perturb $v = v + \eta$ with $\eta\sim\mathcal{N}(0,\sigma I_{m\times m})$
   \State \quad $(A_{r+1}',B_{r+1}') = ([A_r~ \gamma u], [B_r~ - \gamma v])$
   \State \quad $r = r + 1$
   \State {\bfseries End}

   
   \State {\bfseries Return $(A_r,B_r)$} 

\end{algorithmic}
\end{algorithm}

\section{On the  Kurdyka-Lojasiewicz Inequality}

We extend here the discussion on the KL inequality (\autoref{def:kl}) and corresponding convergence results reviewed in \secref{sec:rates}. As a special case to the problem of optimizing $\glar$ considered in this work, we have recalled in \autoref{thm:rates} that if $\glar$ satisfies the KL inequality, we can expect GD to exhibit polynomial rates of convergence. A natural question is when $\glar$ satisfies such inequality. To provide an insight on this issue, we consider the result in \cite{bolte2010} showing that the KL inequality is satisfied by {\em semi-algebraic} functions. We recall that a set $S\subseteq\R^d$ is said semi-algebraic if there exists a finite number of polynomials $p_{kh},q_{kh}:\R^d\to\R$ such that 
\[
  S = \bigcup_{k=1}^K\bigcap_{h=1}^H \left\{ x\in\R^d ~ \Large| ~ p_{kh}(x) = 0,~ q_{kh}(x)\leq0\right\}.
\]
A function $f:\R^d\to\R$ is said semi-algebraic if its graph
\[
  \textrm{graph}~f = \left\{(x,t) ~ \Large| ~ x\in\R^d, t\in\R, f(x) = t\right\}
\]
is semi-algebraic.

Note that a variety of error functions $\ell:\R^{n \times m}\to\R$ typically used in machine learning and matrix factorization problems are semi-algebraic (e.g. the square loss). Interestingly, we have that if $\ell$ is semi-algebraic, then $\ell_r:\R^{n \times r}\times\R^{m \times r}\to\R$ such that $\ell_r(A,B) = \ell(AB^\top)$, is semi-algebraic as well. Indeed, by definition of semi-algebraic function we have that 
\[
  \textrm{graph}~\ell = \left\{(X,t) ~ \Large| ~ X\in\R^{n \times m}, t\in\R, \ell(X) = t\right\}
\]
is semi algebraic, therefore there exist $p_{kh},q_{kh}:\R^{n \times m}\times\R\to\R$ such that
\[
  \textrm{graph}~\ell = \bigcup_{k=1}^K\bigcap_{h=1}^H \left\{ (X,t) ~ \Large| ~ p_{kh}(X,t) = 0, q_{kh}(X,t)\leq0\right\}.
\]
Now, denote $X_{ij}$ the $(i,j)$-th entry of $X$. For $X = AB^\top$ with $A\in\R^{n\times r}$ and $B\in\R^{m \times r}$ we have $X_{ij} = \sum_{s=1}^r A_{is}B_{js}$ namely $X_{ij} = m_{ij}(A,B)$ with $m_{ij}:\R^{n \times r}\times\R^{m \times r}\to\R$ a real polynomial. Let us denote $m:\R^{n\times r}\times\R^{m\times r}\to\R^{n \times m}$ the matrix valued function with $(i,j)$-th entry $m(A,B)_{ij}=m_{ij}(A,B)$. Since every $p_{kh}$ and $q_{kh}$ are polynomial in the variables of $X$, we have that $p_{kh} \circ m$ and $h_{kh} \circ m$ are still polynomials and therefore the graph of $\ell_r$
\begin{align*}
  \textrm{graph}~\ell_r & = \Large\{(A,B,t) ~ \Large| ~ A\in\R^{n \times r}, B\in\R^{m \times r}, t\in\R, \ell(AB^
  top) = t\Large\} \\
  & = \bigcup_{k=1}^K\bigcap_{h=1}^H \Large\{ (A,B,t) ~ \Large| ~ p_{kh}(m(A,B)),t) = 0, ~q_{kh}(m(A,B),t)\leq0\Large\}\\
\end{align*}
is semi-algebraic, which implies that $\ell_r$ is a semi-algebraic function.

Going back to the question of whether $\glar(A,B) = \ell(AB^\top) + \frac{\lambda}{2}(\|A\|_F^2 + \|B\|_F^2)$ is semi-algebraic, we now can conclude that it is sufficient to assume the error function $\ell$ to be semi-algebric. Indeed, it is well-known (see e.g. \cite{bolte2010}) that the squared Frobenius norm of a matrix is semi-algebraic and that the finite sum of semi-algebraic functions is still semi-algebraic. Therefore we can re-formulate \autoref{thm:rates} in terms only of the error $\ell$, namely 
\begin{corollary}[Convergence Rate of Gradient Descent]
Let $(A_k,B_k)_{k\in\N}$ a sequence produced by GD method applied to $\glar$. If $\ell$ is semi-algebraic, then $\glar$ satisfies the KL inequality for some constant $\alpha \in[0,1)$, and there exists a critical point $(A_*,B_*)$ of $\glar$ and constants $C>0$, $b\in(0,1)$ such that
\begin{equation}
\label{eq:gd-rate}
\|(A_k,B_k) - (A_*,B_*)\|_F^2 \leq \left\{
\begin{array}{ll}
Cb^k & \text{if }  \alpha \in(0,1/2],\\
C k^{-\frac{1-\alpha}{2\alpha - 1}} & \text{if } \alpha\in(1/2,1).
\end{array} \right.
\end{equation}
Furthermore, if $\alpha = 0$ convergence is achieved in a finite number of steps.
\end{corollary}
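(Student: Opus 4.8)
The plan is to deduce this statement from the previously established \autoref{thm:rates} by verifying its sole hypothesis — that $\glar$ satisfies the Kurdyka--Lojasiewicz inequality — under the weaker assumption that $\ell$ is semi-algebraic. The bridge is the classical fact (see \cite{bolte2010}) that every semi-algebraic function satisfies the KL inequality at each of its critical points for some exponent $\alpha\in[0,1)$. Consequently, it suffices to show that $\glar$ is semi-algebraic; once this is done, \autoref{thm:rates} applies verbatim and yields the claimed rates, including finite termination when $\alpha=0$.

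First I would show that $\ell_r(A,B):=\ell(AB\trans)$ is semi-algebraic. Since $\ell$ is semi-algebraic, its graph is a finite union of intersections of sets cut out by polynomial (in)equalities $p_{kh}(X,t)=0$, $q_{kh}(X,t)\leq 0$ in the entries of $X\in\R^{n\times m}$ and $t\in\R$. The entries of $X=AB\trans$ are the bilinear — hence polynomial — maps $m_{ij}(A,B)=\sum_{s} A_{is}B_{js}$, so composing each $p_{kh}$ and $q_{kh}$ with the polynomial matrix map $m:(A,B)\mapsto AB\trans$ produces polynomials in $(A,B,t)$. The graph of $\ell_r$ is therefore described by finitely many polynomial (in)equalities and is semi-algebraic; this is precisely the composition computation recorded in the preceding discussion.

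Next I would combine this with two elementary closure properties. The regularizer $\tfrac{\lambda}{2}(\|A\|_F^2+\|B\|_F^2)$ is a polynomial in $(A,B)$ and hence semi-algebraic, and the finite sum of semi-algebraic functions is again semi-algebraic (cf. \cite{bolte2010}). Thus $\glar=\ell_r+\tfrac{\lambda}{2}(\|A\|_F^2+\|B\|_F^2)$ is semi-algebraic, the KL inequality holds at every critical point for some $\alpha\in[0,1)$, and the hypothesis of \autoref{thm:rates} is met.

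I do not expect a genuine obstacle here: the entire content reduces to the stability of semi-algebraicity under composition with the bilinear map $(A,B)\mapsto AB\trans$ and under finite sums, both standard. The only point worth flagging is that the exponent $\alpha$ is supplied \emph{existentially} by the semi-algebraicity argument — it is guaranteed to lie in $[0,1)$ but is not computed — so the corollary is qualitative about which rate regime applies rather than delivering an explicit constant.
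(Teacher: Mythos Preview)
Your proposal is correct and follows essentially the same approach as the paper: the paper's argument preceding the corollary is precisely to show that $\ell_r(A,B)=\ell(AB\trans)$ is semi-algebraic via the polynomial composition $(A,B)\mapsto AB\trans$, add the semi-algebraic regularizer, use closure under finite sums, and then invoke \autoref{thm:rates}. Your note that the exponent $\alpha$ is only guaranteed existentially is also in line with the paper's qualitative stance.
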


\section{Further Experiments}
This last section provides more comparative experiments between the meta-algorithm and the two state-of-the art solvers, as well as a comparison to the algorithm in \cite{Vidal} 

\begin{figure}[t]
\centering
\includegraphics[width=0.95\textwidth]{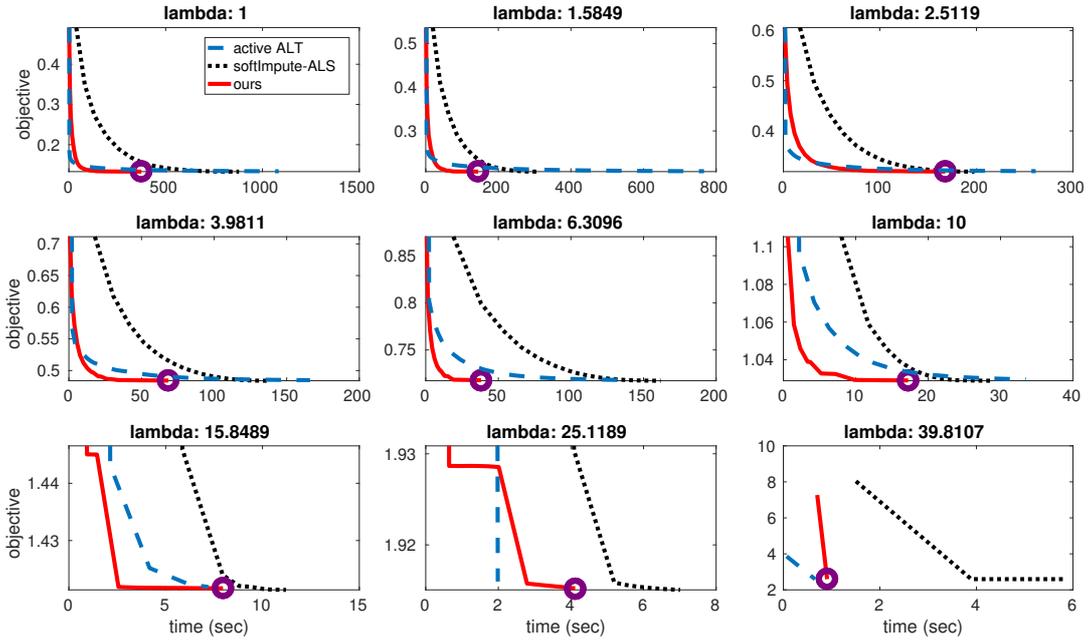}
\caption{Convergence of the objective function on $ml100k$ for various lambda values. The center plot is for the optimal lambda value based on the validation errors. The circle indicates that our proposed global optimality criterion has been satisfied. \label{fig:ml100k-optimization}}
\end{figure}
\begin{figure}[!h]
\centering
\includegraphics[width=0.95\textwidth]{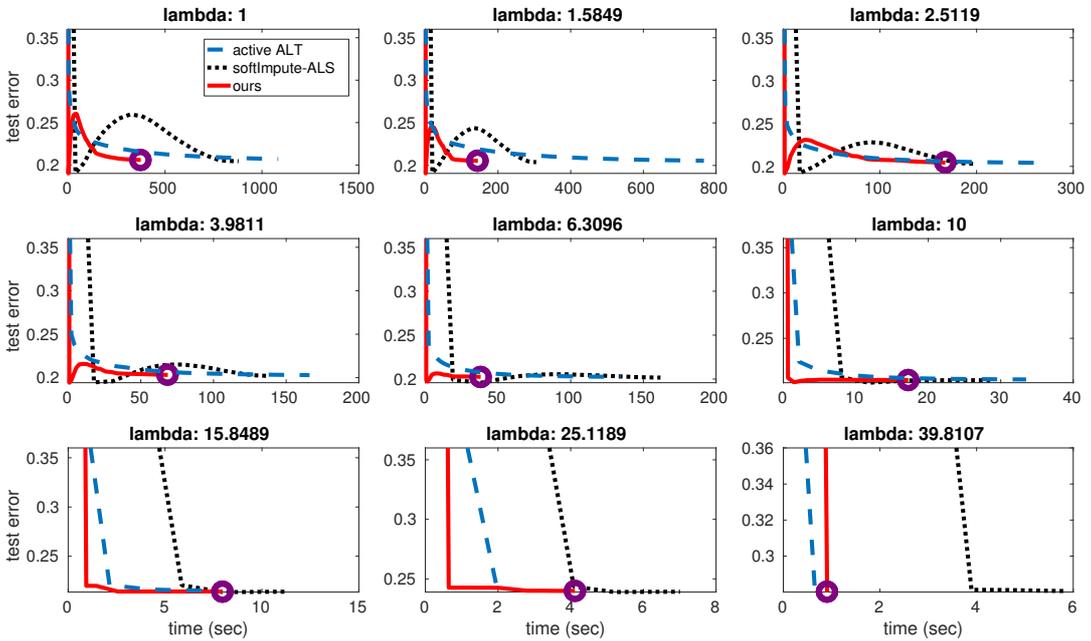}
\caption{Test error on $ml100k$ for various lambda values. The center plot is for the optimal lambda value based on the validation errors. The circle indicates that our proposed global optimality criterion has been satisfied on the original problem. \label{fig:ml100k-test}}
\end{figure}

\subsection{Large Scale Matrix Completion}

In \secref{sec:exps} we reported on the performance of the three methods considered for $\lambda$ chosen by validation as the parameter leading to the lowest Normalized Mean Average Error (NMAE). For completeness here we report the same experiments for a range of candidate values of $\lambda$.

Figures~\ref{fig:ml100k-optimization} and \ref{fig:ml100k-test} report respectively the value of the objective function $\fla$ and the test error (NMAE) for the three methods considered in our experiments. Interestingly we observe a similar pattern to the one of the optimal $\lambda$, with our method exhibiting comparable performance in terms of both time and test error to the state-of-the-art competitors for most of the $\lambda$ considered.

\end{document}